\theoremstyle{plain}% Theorem-like structures provided by amsthm.sty
\newtheorem{theorem}{Theorem}[section]
\newtheorem{lemma}[theorem]{Lemma}
\newcommand{\dd}{\mathrm{d}}
\theoremstyle{definition}
\newtheorem{definition}[theorem]{Definition}
\newtheorem*{example*}{Example}
\theoremstyle{remark}
\newtheorem*{remark*}{Remark}
\title{MetaGFN: Exploring Distant Modes with Adapted Metadynamics for Continuous GFlowNets}
\author{
  Dominic Phillips \\
  University of Edinburgh \\
  Edinburgh \\
  \texttt{dominic.phillips@ed.ac.uk} \\
  %% examples of more authors
   \And
  Flaviu Cipcigan \\
  IBM Research Europe, UK \\
  \texttt{flaviu.cipcigan@ibm.com} \\
  %% \AND
  %% Coauthor \\
  %% Affiliation \\
  %% Address \\
  %% \texttt{email} \\
  %% \And
  %% Coauthor \\
  %% Affiliation \\
  %% Address \\
  %% \texttt{email} \\
  %% \And
  %% Coauthor \\
  %% Affiliation \\
  %% Address \\
  %% \texttt{email} \\
}
\begin{document}
\maketitle

\begin{abstract}
Generative Flow Networks (GFlowNets) are a class of generative models that sample objects in proportion to a specified reward function through a learned policy. They can be trained either on-policy or off-policy, needing a balance between exploration and exploitation for fast convergence to a target distribution. While exploration strategies for discrete GFlowNets have been studied, exploration in the continuous case remains to be investigated, despite the potential for novel exploration algorithms due to the local connectedness of continuous domains. Here, we introduce Adapted Metadynamics, a variant of metadynamics that can be applied to arbitrary black-box reward functions on continuous domains. We use Adapted Metadynamics as an exploration strategy for continuous GFlowNets. We show several continuous domains where the resulting algorithm, MetaGFN, accelerates convergence to the target distribution and discovers more distant reward modes than previous off-policy exploration strategies used for GFlowNets.
\end{abstract}

\section{Introduction}
Generative Flow Networks (GFlowNets) are a type of generative model that samples from a discrete space $\mathcal{X}$ by sequentially constructing objects via actions taken from a learned policy $P_F$ \cite{bengio_flow_2021}. The policy $P_F(s, s')$ specifies the probability of transitioning from some state $s$ to some other state $s'$. The policy is parameterised and trained so that, at convergence, the probability of sampling an object $x \in \mathcal{X}$ is proportional to a specified reward function $R(x)$. GFlowNets offer advantages over more traditional sampling methods, such as Markov chain Monte Carlo (MCMC), by learning an amortised sampler, capable of single-shot generation of samples from the desired distribution. Since GFlowNets learn a parametric policy, they can generalise across states, resulting in higher performance across various tasks \cite{bengio_flow_2021, malkin_trajectory_2022, zhang_generative_2022, jain_biological_2022, deleu_bayesian_2022, jain_multi-objective_2023, hu_gflownet-em_2023, zhang_robust_2023, shen_tacogfn_2023} and applications to conditioned molecule generation \cite{shen_tacogfn_2023}, maximum likelihood estimation in discrete latent variable models \cite{hu_gflownet-em_2023}, structure learning of Bayesian networks \cite{deleu_bayesian_2022}, scheduling computational operations \cite{zhang_robust_2023}, and discovering reticular materials for carbon capture \cite{cipcigan_discovery_2023}.

Although originally conceived for discrete state spaces, GFlowNets have been extended to more general state spaces, such as entirely continuous spaces, or spaces that are hybrid discrete-continuous \cite{lahlou_theory_2023}. In the continuous setting, given the current state, the policy specifies a continuous probability distribution over subsequent states, and the probability density over states $x \in \mathcal{X}$ sampled with the policy is proportional to a reward density function $r(x)$. The continuous domain unlocks more applications for GFlowNets, such as molecular conformation sampling \cite{volokhova_towards_2023} and continuous control problems \cite{luo_multi-agent_2024}. 

GFlowNets are trained like reinforcement learning agents. Trajectories of states are generated either on-policy or off-policy, with the terminating state $x \in \mathcal{X}$ providing a reward signal for informing a gradient step on the policy parameters. GFlowNets therefore suffer from the same training pitfalls as reinforcement learning. One such issue is slow temporal credit assignment, which has thus far been addressed by designing more effective loss functions, such as detailed balance \cite{bengio_gflownet_2021}, trajectory balance \cite{malkin_trajectory_2022} and sub-trajectory balance \cite{madan_learning_2022}. This latter approach has recently been extended by providing an energy function inductive bias at the early stages of training \cite{pan_better_2023}.

Besides loss functions, another aspect of GFlowNet training is the exploration strategy for acquiring training samples. Exclusively on-policy learning is generally inadequate as it leads to inefficient exploration of new modes. More successful strategies therefore rely on off-policy exploration. For the discrete setting, numerous exploration strategies have been proposed including $\epsilon$-noisy with a uniform random policy, tempering, Generative Augmented Flow Networks (GAFN) \cite{pan_generative_2022}, Thompson sampling \cite{rector-brooks_thompson_2023} and Local Search GFlowNets \cite{kim_local_2024}. While these approaches can be generalised to the continuous domain, there is limited literature benchmarking their effectiveness in this setting.

Sampling in the continuous setting is a common occurrence in various domains such as molecular modelling \cite{hawkins_conformation_2017, yang_enhanced_2019} and Bayesian inference \cite{shahriari_taking_2016}. The local connectedness of a continuous domain allows for novel exploration strategies that are not directly applicable in the discrete setting. \cite{sendera_improved_2024} compared several exploration strategies in the context of diffusion samplers, and proposed alternating between on-policy sampling and backward sampling from a fixed set of MCMC samples when training continuous GFlowNets. These samples were selected prior to training, thereby inducing a fixed pre-training cost but keeping the cost of each training trajectory constant. The authors showed effective exploration, even in high-dimensional problems. However, their MCMC approach requires access to cheap gradients of the reward landscape, which might not always be readily available \cite{rengarajan_reinforcement_2022}. In addition, there are some settings, such as in molecular conformation sampling, where MCMC approaches are known to require significantly longer timescales to overcome energy barriers than methods based on molecular dynamics (MD) simulations \cite{abrams_enhanced_2014}, thereby making the pre-training cost prohibitively expensive.   

In this work, we propose MetaGFN, a novel exploration algorithm for continuous GFlowNets inspired by metadynamics, an enhanced sampling method widely used for molecular modelling \cite{laio_escaping_2002}. Unlike MCMC-based sampling methods, MetaGFN operates in a general black-box setting, relying solely on an oracle for reward values, without requiring access to reward gradients. In common with standard metadynamics, MetaGFN converges quickly when a reduced-dimensional representation of the reward measure, specified by a so-called collective variable (CV) basis, is known. While our method is domain-agnostic, this feature makes MetaGFN particularly well-suited for small molecular systems, where such representations are known and often low-dimensional \cite{fiorin_using_2013}. MetaGFN requires no pre-training and only introduces a small, constant cost for each training trajectory, which becomes negligible as the cost of reward evaluation increases.

The main contributions of this work are:
\begin{itemize}
    \item Introducing MetaGFN, an algorithm that adapts metadynamics to black box rewards and continuous GFlowNets;
    \item Proving that the Adapted Metadynamics formulation underlying MetaGFN is consistent and reduces to standard metadynamics in the appropriate limit;
    \item Showing empirically that MetaGFN outperforms existing GFlowNets exploration strategies in various continuous environments, including alanine dipeptide conformation sampling.
\end{itemize}

The rest of the paper is as follows. In Section \ref{sec:prelims} we review the theory of discrete and continuous GFlowNets as well as metadynamics and collective variables. In Section \ref{sec:adaptedMetad}, we present the Adapted Metadynamics and MetaGFN algorithms. In Section \ref{sec:exp}, we evaluate MetaGFN against other approaches, showing that MetaGFN generally outperforms existing exploration strategies in various continuous environments. We finish with limitations and conclusions in Sections \ref{sec:limitations} and \ref{sec:Conclusions}. Code for MetaGFN is available at \url{https://github.com/dominicp6/manifold-gfn}.

\section{Preliminaries}
\label{sec:prelims}

\subsection{Discrete GFlowNets}

In a GFlowNet, the \textit{network} refers to a directed acyclic graph (DAG), denoted as $G = (\mathcal{S}, \mathcal{A})$. Nodes represent \textit{states} $s \in \mathcal{S}$, and edges represent \textit{actions} $s \rightarrow s' \in \mathcal{A}$ denoting one-way transitions between states. The DAG has two distinguishable states: a unique \textit{source state} $s_0$, that has no incoming edges, and a unique \textit{sink state} $\perp$, that has no outgoing edges. 

The set of states, $\mathcal{X} \subset \mathcal{S}$, that are directly connected to the sink state are known as \textit{terminating states}. GFlowNets learn forward transition probabilities, known as a \textit{forward policy} $P_F(s' | s)$, 
along the edges of the DAG so that the resulting marginal distribution over the terminal states complete trajectories (the \textit{terminal distribution})
, denoted as $P^{\perp}(x)$, 
is proportional to a given \textit{reward function} $R \vcentcolon \mathcal{X} \rightarrow \mathbb{R}$.  
GFlowNets also introduce additional learnable objects, such as a \textit{backward policy} $P_B(s | s')$, which is a distribution over the parents of any state of the DAG, to create losses that train the forward policy. Objective functions for GFlowNets include flow matching (FM), detailed balance (DB), trajectory balance (TB) and subtrajectory balance (STB) \cite{bengio_flow_2021, bengio_gflownet_2021, malkin_trajectory_2022, madan_learning_2022}.
During training, the parameters of the flow objects are updated with stochastic gradients of the objective function applied to batches of trajectories. These trajectory batches can be obtained either directly from the current forward policy or from an alternative algorithm that encourages exploration. These approaches are known as \textit{on-policy} and \textit{off-policy} training respectively. 

\subsection{Continuous GFlowNets}

Continuous GFlowNets extend the generative problem to continuous spaces \cite{lahlou_theory_2023}, where the analogous quantity to the DAG is a measurable pointed graph (MPG) \cite{nummelin_general_1984}. MPGs can model continuous spaces (e.g., Euclidean space, spheres, tori), as well as hybrid spaces, with a mix of discrete and continuous components, as often encountered in robotics, finance, and biology \cite{bortolussi_hybrid_2008, swiler_surrogate_2012, neunert_continuous-discrete_2020}.

\begin{definition}[Measurable pointed graph (MPG)]
    Let $(\bar{\mathcal{S}}, \mathcal{T})$ be a topological space, where $\bar{\mathcal{S}}$ is the \textit{state space}, $\mathcal{T}$ is the set of open subsets of $\bar{\mathcal{S}}$, and $\Sigma$ is the Borel $\sigma$-algebra associated with the topology of $\bar{\mathcal{S}}$. Within this space, we identify the \textit{source state} $s_0 \in \bar{\mathcal{S}}$ and \textit{sink state} $\perp \in \bar{\mathcal{S}}$, both distinct and isolated from the rest of the space. On this space we define a \textit{reference transition kernel} $\kappa \vcentcolon \bar{\mathcal{S}} \times \Sigma \rightarrow [0, +\infty)$ and a \textit{backward reference transition kernel} $\kappa^b \vcentcolon \bar{\mathcal{S}} \times \Sigma \rightarrow [0, +\infty)$. The support of $\kappa(s, \cdot)$ are all open sets accessible from $s$. The support of $\kappa^b(s, \cdot)$ are all open sets where $s$ is accessible from. Additionally, these objects must be well-behaved in the following sense:
\begin{itemize}
    \item[(i)] \textit{Continuity}: For all $B \in \Sigma$, the mapping $s \mapsto \kappa(s, B)$ is continuous. 
    \item[(ii)] \textit{No way back from the source}: The backward reference kernel has zero support at the source state, i.e. for all $B \in \Sigma$, $\kappa^b(s_0, B) = 0$.
    \item[(iii)] \textit{No way forward from the sink}: When at the sink, applying the forward kernel keeps you there, i.e. $\kappa(\perp, \cdot) = \delta_\perp(\cdot)$, where $\delta_\perp$ is the Dirac measure of the sink state.  
    \item[(iv)] \textit{A fully-explorable space}: The number of steps required to possibly reach any measurable $B \in \Sigma$ from the source state, and to guarantee to reach the sink state, with the forward reference kernel is bounded. 
\end{itemize}

The set of objects $(\bar{\mathcal{S}}, \mathcal{T}, \Sigma, s_0, \perp, \kappa, \kappa^b)$ then defines an MPG.
\end{definition}

Note that the support of $\kappa(s, \cdot)$ and $\kappa^b(s, \cdot)$ are analogous to the child and parent sets of a state $s$ in a DAG. Similarly, a discrete GFlowNet's DAG satisfies discrete versions of (ii), (iii), and (iv).

The set of \textit{terminating states} $\mathcal{X}$ are the states that can transition to the sink, given by $\mathcal{X} = \left\{ s \in \mathcal{S} \vcentcolon \kappa(s, \{\perp\}) > 0 \right\}$, where $\mathcal{S} \vcentcolon= \bar{\mathcal{S}} \setminus \{ s_0 \}$. \textit{Trajectories} $\tau$ are sequences of states that run from source to sink, $\tau = (s_0, \dots, s_n, \perp)$. The \textit{forward Markov kernel} $P_F \vcentcolon \bar{\mathcal{S}} \times \Sigma \rightarrow [0, \infty)$ and \textit{backward Markov kernel} $P_B \vcentcolon \bar{\mathcal{S}} \times \Sigma \rightarrow [0, \infty)$ have the same support as $\kappa(s, \cdot)$ and $\kappa^b(s, \cdot)$ respectively, where being a Markov kernel means states are mapped to probability measure, hence $\int_{\bar{\mathcal{S}}}P_F(s, ds') = \int_{\bar{\mathcal{S}}}P_B(s, ds') = 1$. A \textit{flow} F is a tuple $F=(f, P_F)$, where $f : \Sigma \rightarrow [0, \infty)$ is a \textit{flow measure}, satisfying $f(\{ \perp\}) = f(s_0) = Z$, where $Z$ is the \textit{total flow}. 

The \textit{reward measure} is a positive and finite measure $R$ over the terminating states $\mathcal{X}$. The density of $R$ is denoted as $r(x)$, for $x \in \mathcal{X}$. A flow $F$ is said to satisfy the \textit{reward-matching conditions} if 
\[
R(dx) = f(dx)P_F(x, \{\perp\}).
\]
If a flow satisfies the reward-matching conditions and trajectories are generated by recursively sampled from the Markov kernel \( P_F \) starting at \( s_0 \), the resulting \textit{measure over terminating states}, \( P^{\perp}(B) \), is proportional to the reward: \( P^{\perp}(B) = \frac{R(B)}{R(\mathcal{X})} \) for any \( B \) in the \(\sigma\)-algebra of terminating states \cite{lahlou_theory_2023}.

Objective functions for discrete GFlowNets generalise to continuous GFlowNets. However, in the continuous case, the \textit{forward policy} $\hat{p}_F : \mathcal{S} \times \bar{\mathcal{S}} \rightarrow [0, \infty)$, \textit{backward policy} $\hat{p}_B : \mathcal{S} \times \bar{\mathcal{S}} \rightarrow [0, \infty)$ and \textit{parameterised flow} $\hat{f} : \mathcal{S} \rightarrow [0, \infty)$ parameterise the $P_F$, $P_B$ transition kernels and flow measure $f$ on an MPG. Discrete GFlowNets parameterise log transition probabilities and flows on a DAG. In this work, we consider DB, TB and STB losses. For a complete trajectory $\tau$, the TB loss can be written as 
\[
L_{TB}(\tau) = \left(\log \frac{Z_\theta \prod_{t=0}^n \hat{p}_F(s_t, s_{t+1}; \theta)}{r(s_n) \prod_{t=0}^{n-1} \hat{p}_B(s_{t+1}, s_t; \theta)} \right)^2,
\]
where $Z_\theta$ is the parameterised total flow (see Appendix \ref{appx:lossFunctions} for the DB and STB loss functions). 

\subsection{Exploration strategies for GFlowNets}

\label{sec:explorationStrategies}

GFlowNets can reliably learn using off-policy trajectories, a key advantage over hierarchical variational models \cite{malkin_gflownets_2023}. For optimal training, it is common to use a replay buffer and alternate between on-policy and off-policy (exploration) batches \cite{shen_towards_2023}. In the discrete case, proposed techniques to encourage exploration include $\epsilon$-noisy exploration, tempering and the incorporation of intermediate rewards \cite{bengio_flow_2021, pan_generative_2022}. Exploration strategies for continuous GFlowNets are less well studied in the literature but several methods designed for discrete GFlowNets can be adapted \cite{sendera_improved_2024}. In this work, we consider:

\textbf{Local Search GFlowNets} \cite{kim_local_2024}: Explores by backtracking and resampling on-policy trajectories. Reconstructed trajectories with a higher reward than the original are used for training, thus encouraging the learning of high-reward modes.

\textbf{Thompson sampling}. \cite{rector-brooks_thompson_2023}: Explores high-uncertainty regions by using an ensemble of policy heads with a shared torso. A random head generates the on-policy trajectory, and the loss is computed by averaging contributions over heads, where each head is independently included with probability $p$. 

\textbf{Noisy exploration}: Explores by increasing policy uncertainty. A small constant is added to the policy variance which is gradually reduced to zero over the course of training.

\textbf{Nested sampling} \cite{lemos_improving_2023}: A Markov-Chain Monte Carlo (MCMC) algorithm is used to sample from the reward distribution. Backward sampling from these terminal states are used to generate off-policy trajectories. 

\subsection{Metadynamics and collective Variables}

Molecular dynamics (MD) simulates that dynamics of molecules using \textit{Langevin dynamics} (LD) \cite{pavliotis_stochastic_2014}, a stochastic differential equation that models particle motion under friction and random noise. LD trajectories ergodically sample the molecule's Gibbs measure, $\rho_\beta (x) \propto e^{-\beta V(x)}$, where $x \in \mathcal{X}$ represents atomic positions, $V(x)$ is the molecular potential, and $\beta$ is the thermodynamic beta.\footnote{We review Langevin dynamics in Appendix \ref{appx:langevinDynamics}.} However, when $V(x)$ contains multiple, deep local minima, as is common in biomolecules, then LD becomes inefficient, as it tends to get trapped in these minima, slowing exploration of the full state space.  

\textit{Metadynamics} overcomes this by progressively modifying the potential landscape to discourage visits to already-explored regions  \cite{laio_escaping_2002}. It achieves this by regularly depositing repulsive Gaussian biases, centered at the current state of the evolving LD trajectory. This modifies the potential to $V_{\text{total}}(x,t) = V(x) + V_{\text{bias}}(x,t)$, where $V_{\text{bias}}(x,t)$ is the cumulative bias at time $t$. Intuitively, metadynamics thus progressively transforms the landscape into more level surface, in which the system can diffuse more freely, thereby accelerating exploration (Figure \ref{fig:metadynamicsIllustrated}). In the limit $t \rightarrow \infty$, the dynamics approximates free diffusion and uniformly samples the domain of $V(x)$.

%This leads to a total force of $-\nabla V_{\text{total}}(x,t)$ that dynamically flattens energy barriers, promoting broader exploration over low-energy saddles. 

Since biases are typically specified on a numerical grid, this gives rise to a memory cost that grows exponentially with dimension. Therefore, biases are typically applied along low-dimensional coordinates known as \textit{collective variables} (CVs). A collective variable $z(x): \mathcal{X} \rightarrow \mathcal{Z}$ is any mapping from the original (high-dimension) state space $\mathcal{X}$ to a lower-dimensional space $\mathcal{Z}$. For potential $V(x)$ and collective variables $z$, metadynamics is guaranteed to eventually uniformly sample the domain of the \textit{marginal potential} $V(z') \vcentcolon= \int_{\mathcal{X}} \delta(z' - z(x))V(x)\dd x$. In molecular contexts, ideal choices of CVs correspond to the physical reaction coordinates that underly rare-event transitions (e.g. backbone dihedrals, interatomic distances) \cite{laio_metadynamics_2008, de_vivo_role_2016}. In more general settings, ideal CVs should \underline{resolve rare events} dynamics on $V(x)$, \underline{simplfy the landscape} (retaining its essential features e.g. barriers and basins), and \underline{minimise the dimensionality} of this representation. For simple systems, CVs can be derived from known symmetries or macroscopic order parameters that describe state changes. However, if the choice of CVs is not obvious, then they can learnt through data-driven methods such as Time-Lagged Independent Component Analysis (TICA) \cite{molgedey_separation_1994}, manifold learning algorithms, neural network autoencoders or variational methods \cite{mardt_vampnets_2018, bonati_deep_2021, ramaswamy_deep_2021}. For an in-depth review on these CV-learning approaches, see \cite{sidky_machine_2020}. Metadynamics has seen numerous extensions \cite{bussi_using_2020}. In the next section, we adapt the original metadynamics algorithm to the continuous black box setting. We will assume suitable CVs are given, but the theory we present is agnostic to their form--be it analytical expressions, neural networks, or non-parametric tabular mappings.

\begin{figure}
    \centering
    \includegraphics[width=0.95\linewidth]{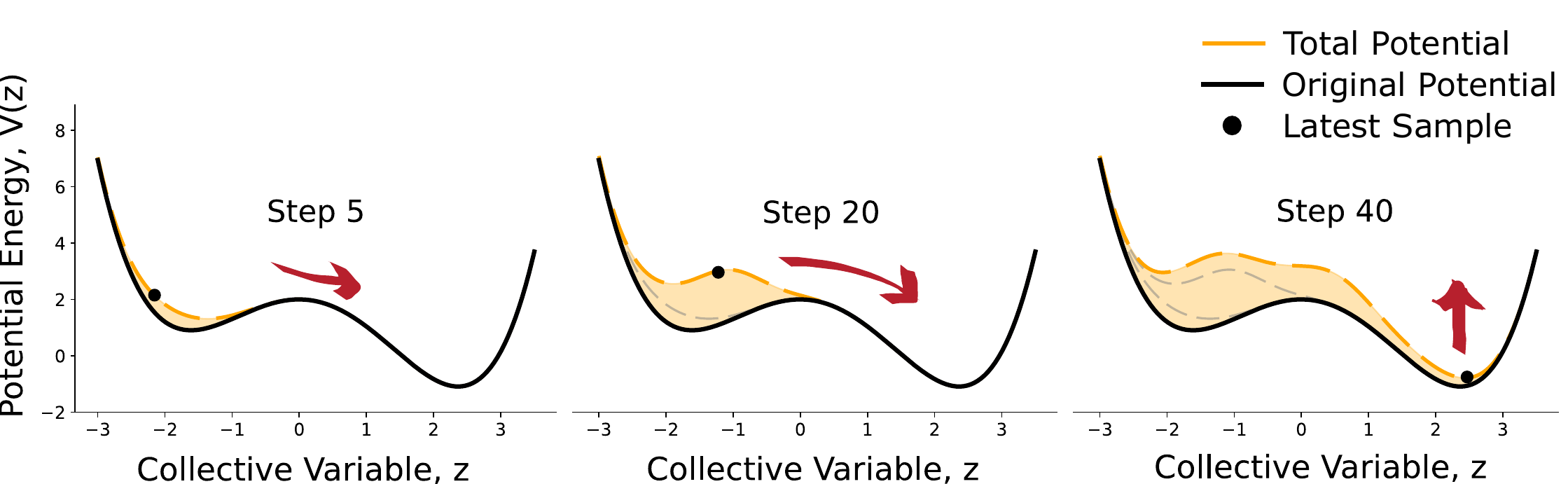}
    \caption{Illustration of metadynamics in a multi-well potential. Regular deposition of a bias leads to a total potential that gradually flattens, encouraging exploration.}
    \label{fig:metadynamicsIllustrated}
\end{figure}

\section{MetaGFN: Adapted Metadynamics for GFlowNets}
\label{sec:adaptedMetad}

Metadynamics has two properties that make it well-suited as an exploration strategy. Firstly, it eventually uniformly samples the domain, ensuring exploration of \textit{all} local minima. In contrast, techniques such as Local Search GFlowNets, Thompson sampling, and noisy exploration rely on localised strategies that are prone to mode locking (Section \ref{sec:exp}). Secondly, its gradual diffusion from the starting configuration allows incremental exploration, resulting in more stable training than global approaches like nested sampling. However, metadynamics requires a potential and its gradients, which are not always available in a black-box GFlowNet setting. We adapt metadynamics to this setting by interpreting the reward landscape as a potential energy surface, resulting in the exploration algorithm \textit{Adapted Metadynamics} (AM) (Section \ref{sec:adaptedMetad1}) and training algorithm \textit{MetaGFN} (Section \ref{sec:adaptedMetad2}).

% Training GFlowNets in high-dimensional continuous spaces requires exploration, especially if valuable reward peaks are separated by large regions of low reward. In some tasks, prior knowledge of the principal manifold directions along which the reward measure varies can reduce the effective dimension required for exploration. This is where exploration algorithms that guarantee uniform sampling in that low-dimensional manifold, such as metadynamics, become most effective. With this intuition in mind, we adapt metadynamics to the black-box reward setting of continuous GFlowNets. 

\textbf{Interpreting reward density as a potential}: We assume that $\mathcal{X}$ is a manifold %(locally homeomorphic to Euclidean space) 
and that the reward density $r(x)$ is bounded and L1-integrable over $\mathcal{X}$ with at most finitely many discontinuities. Thus, the target density over terminal states, $\rho(x) \vcentcolon= r(x) / \int_{\mathcal{X}} r(x') \, dx'$, can be expressed as a Gibbs distribution:  $\rho(x) = \exp(-\beta' V(x)) / \int_{\mathcal{X}} r(x') \, dx' $, where we identify $ V(x) = - \frac{1}{\beta'} \ln r(x) $ as the corresponding potential, for some constant scalar $\beta' > 0$. Reward maxima thus correspond to potential energy minima. From simplicity, we set $\beta'$ equal to the thermodynamic beta ($\beta$). Making this choice means that the single parameter $\beta$ uniquely controls the (unbiased) transition rates between minima of the potential\footnote{From the Kramer formula; $\text{transition rate} \propto \frac{1}{\beta} \exp(\beta \Delta V)$, where $\Delta V \propto \frac{1}{\beta'} = \frac{1}{\beta}$.}. \textbf{Our goal is to use metadynamics to explore \( V(x) \), generating high-reward terminal states that guide the GFlowNet to sample all reward density modes}. 

\subsection{Adapted Metadynamics}
\label{sec:adaptedMetad1}

\begin{algorithm}[t]
\caption{Adapted Metadynamics}\label{alg:GFlowNetMetadynamics}
\SetKwInOut{Input}{Input}
\SetKwInOut{Output}{Output}
\Input{Manifold environment of terminating states $\mathcal{X}$ with reward density $r \vcentcolon \mathcal{X} \rightarrow \mathbb{R}$. Initial state $(x_t, p_t) \in \mathcal{X} \times T_{x_t}(\mathcal{X})$. Collective variables $z = \left(z_1, \dots, z_d \right)$.}
\Parameter{Gaussian width $\sigma = \left( \sigma_1, \dots, \sigma_d \right) \in \mathbb{R}^d$. Gaussian height $w > 0$. Stride $n \in \mathbb{Z}^+$. LD parameters: $\gamma, \beta$. Timestep $\Delta t$. }

$\hat{N} \leftarrow 0$ \\
$\hat{R} \leftarrow 0$ \\
$\hat{V}(z) \leftarrow 0$ \\
$V_{\text{bias}}(z) \leftarrow 0$ \\
\textbf{every} timestep $\Delta t$: \\
\quad \quad $z_t \leftarrow z(x_t)$ \\
\quad \quad \textbf{every} n timesteps $n \Delta t$: \label{lin:everyn} \\ 
\quad \quad \quad \quad $\hat{N} \leftarrow \hat{N} + \exp\left({-\frac{1}{2}\sum_{i=1}^{d}\frac{(z_i - {z}_{i,t})^2}{{\sigma_i^2}} }\right)$ \label{lin:N}\\
\quad \quad \quad \quad $\hat{R} \leftarrow \hat{R} + r(x_t) \cdot \exp\left({-\frac{1}{2}\sum_{i=1}^{d}\frac{(z_i - {z}_{i,t})^2}{{\sigma_i^2}}}\right)$\label{lin:R}\\
\quad \quad \quad \quad $\hat{V} \leftarrow - \frac{1}{\beta} \log \left(\hat{R} / (\hat{N} + \epsilon) + \epsilon \right)$  \label{lin:kdeV} \\
\quad \quad \quad \quad $V_{\text{bias}}(z) \leftarrow V_{\text{bias}}(z) + n \cdot \Delta t \cdot w \cdot \exp\left({-\frac{1}{2}\sum_{i=1}^{d}\frac{(z_i - {z}_{i,t})^2}{{\sigma_i^2}} }\right)$ \label{lin:Vbias} \\
\quad \quad \textbf{compute} forces: \\
\quad \quad \quad \quad $F \leftarrow - \left(\nabla_z \hat{V}(z)|_{z=z_t} + \nabla_z V_{\text{bias}}(z)|_{z=z_t}\right)\cdot \nabla_x z|_{x = x_t}$ \\
\quad \quad \text{\textbf{propagate} $x_t, p_t$ by $\Delta t$ using Langevin dynamics with computed force $F$ (Alg. \ref{alg:EMLDstep}, Appendix \ref{appx:langevinDynamics}). \label{line:langevinDynamics}}
\end{algorithm}

Metadynamics requires the gradient of the total potential, where $-\nabla V_{\text{total}}(x,t) = -\nabla(V(x) + V_{\text{bias}}(x,t))$. Using the above assumptions, we have $\nabla V(x) = -\nabla r(x)/(\beta' r(x))$. However, $ r(x) $ is often a computationally expensive black-box function, and its gradient, $\nabla r(x)$, is unknown. While finite differences can estimate $\nabla r(x)$ for smooth, low-dimensional reward distributions, this approach is impractical in high-dimensional spaces. Below, we explain how to adapt metadynamics to this black-box setting and avoid finite difference gradient estimates by storing a dynamically-updated kernel density estimate (KDE) of the potential. We call the modified metadynamics algorithm \textit{Adapted Metadynamics} (Algorithm \ref{alg:GFlowNetMetadynamics}).

Let $x_t$ denote the metadynamics sample at time $t$ and  $z(x)=(z_1(x), \dots, z_d(x))$ be a given set of collective variables, where $z_i$ is a one-dimensional coordinate, and $z \in \mathcal{Z}$ is $d$-dimensional. Let $z_{i,t} \vcentcolon= z_i(x_{t})$ denote the corresponding $i^{th}$ CV coordinate at time $t$ and assume that the Jacobian $\nabla_x z$ is well-defined. We store a discretisation of the KDE marginal potential and bias potentials in CV space $\mathcal{Z}$. Since these are stored in memory, gradient computations are cheap and require no further evaluations of $r(x)$. 

Let $\hat{V}(z, t)$ represent the KDE of the marginal potential at time $t$. To compute \(\hat{V}(z,t)\), we maintain two separate KDEs: \(\hat{N}(z,t)\) for visited states (line \ref{lin:N}) and \(\hat{R}(z,t)\) for cumulative rewards (line \ref{lin:R}). We update these KDEs on-the-fly at the same time the bias potential is updated, which occurs every integer $n$ steps of Langevin dynamics (line \ref{lin:everyn}). If $\mathcal{Z} \cong \mathbb{R}^d$, we use Gaussian kernels with kernel width $\sigma = (\sigma_1, \ldots, \sigma_k) \in \mathbb{R}^k$, matching the width of the Gaussian bias (line \ref{lin:Vbias})\footnote{If $\mathcal{Z} \cong \mathbb{T}^d$ ($d$-torus), we use von Mises distributions.}. This is a reasonable as both are set by the variability length scale of $V(x)$. Finally, the KDE potential $\hat{V}(z,t)$ is then computed as $\hat{V}(z,t) = - \frac{1}{\beta} \log \left(\frac{\hat{R}(z,t)}{\hat{N}(z,t) + \epsilon} + \epsilon \right)$, for a fixed constant $\epsilon > 0$ (line \ref{lin:kdeV}). We found empirically that $\epsilon$ ensured numerical stability by preventing division by zero and bounding $\hat{V}$ from above. Defining $\hat{V}$ through the ratio $\hat{R}/\hat{N}$ ensures that it rapidly and smoothly adjusts whenever new modes are discovered. Furthermore, we prove that $\hat{V}$ eventually discovers \textit{all} reward modes in the CV space. More precisely,

%with update rules:\footnote{If $\mathcal{Z} \cong \mathbb{T}^d$ ($d$-torus), we use von Mises distributions instead of Gaussians.}

%\[
%\hat{N}(z, t_{n+1}) = \hat{N}(z, t_n) + \exp\left(- \frac{1}{2} \sum_{i=1}^d \frac{(z_i - %z_{i,t_{n+1}})^2}{\sigma_i^2} \right),
%\]
%\[\hat{R}(z, t_{n+1}) = \hat{R}(z, t_{n}) + r(x_{t_{n+1}}) \exp\left(- \frac{1}{2} \sum_{i=1}^d  %\frac{(z_i - z_{i,t_{n+1}})^2}{\sigma_i^2}  \right),
%\]
%The KDE potential \(\hat{V}(z,t_n)\) is computed as:
%\[
%\hat{V}(z,t_n) = - \frac{1}{\beta'} \log \left(\frac{\hat{R}(z,t_n)}{\hat{N}(z,t_n) + \epsilon} + \epsilon \right),
%\]
%for a fixed constant \(\epsilon > 0\). 

\begin{theorem}
\label{thm:adaptiveMetadynamics}
If the collective variable $z(x)$ is analytic with a bounded domain, then
\begin{equation}
\label{eqn:kde_is_faithful}
\lim_{\epsilon \rightarrow 0} \left( \lim_{\sigma \rightarrow 0} \left( \lim_{t \rightarrow \infty} \hat{V}(z,t) \right) \right) = V,
\end{equation}
where $V = V(z') \vcentcolon= \int_{\mathcal{X}} \delta(z' - z(x))V(x) \dd x$. %is the marginal potential in the CV space. % if $z(x)$ is not invertible. If $z(x)$ is invertible, $V$ is the original potential $V(x)$ in the original coordinates.
\end{theorem}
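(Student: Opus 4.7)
The plan is to unfold the three nested limits in the prescribed order and exploit the fact that $\hat{R}(z)_t/\hat{N}(z)_t$ is structurally a Nadaraya--Watson kernel regression estimator for $r$ viewed as a function of the collective variable. Unrolling the recursion, $\hat{N}(z)_t = \sum_{s=1}^{t} K_\sigma(z - z(x_s))$ and $\hat{R}(z)_t = \sum_{s=1}^{t} r(x_s)\, K_\sigma(z - z(x_s))$, where $K_\sigma$ denotes the product Gaussian or von Mises kernel of width $\sigma$. For the innermost limit I would invoke ergodicity of the adapted metadynamics chain: the accumulated bias asymptotically flattens the effective sampling landscape in CV space, so the visiting measure of $\{x_s\}$ admits a limiting distribution $\pi$ whose $z$-marginal is supported on the full image of $z$ and bounded away from zero there. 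By the law of large numbers, almost surely, $t^{-1}\hat{N}(z)_t \to \mathbb{E}_\pi[K_\sigma(z-z(x))]$ and $t^{-1}\hat{R}(z)_t \to \mathbb{E}_\pi[r(x) K_\sigma(z-z(x))]$; since both sums diverge linearly in $t$, the additive $\epsilon$ in the denominator is absorbed and $\lim_{t\to\infty}\hat{V}(z,t) = -(\beta')^{-1}\log\bigl(\mathbb{E}_\pi[r(x)K_\sigma(z-z(x))]/\mathbb{E}_\pi[K_\sigma(z-z(x))] + \epsilon\bigr)$.

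\textbf{Localization and deregularization.} As $\sigma\to 0$, $K_\sigma$ approaches a Dirac mass at $z$. Analyticity of $z$ on a bounded domain ensures that the level sets $\{x : z(x)=z\}$ are smooth submanifolds for almost every $z$, with a well-defined co-area formula; combined with boundedness of $r$ and its finitely many discontinuities, dominated convergence then yields $\mathbb{E}_\pi[rK_\sigma]/\mathbb{E}_\pi[K_\sigma] \to \mathbb{E}_\pi[r(x)\mid z(x)=z]$. If $z$ is invertible the conditioning collapses and the right-hand side equals $r(z^{-1}(z))$; taking $-(\beta')^{-1}\log$ and letting $\epsilon\to 0$ returns $V$ expressed in the original coordinates. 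If $z$ is non-invertible, I would identify the conditional expectation with the marginal potential $V(z') = \int_{\mathcal{X}} \delta(z'-z(x))V(x)\,\dd x$ by unpacking the co-area formula together with the fact that the metadynamics bias has asymptotically rendered $\pi$ uniform in $z$, so the conditional expectation reduces to a pure level-set average.

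\textbf{Main obstacle.} The delicate step is the ergodic claim underlying the inner limit: the bias potential $V_{\text{bias},t}$, the KDE-based potential $\hat{V}$, and the Langevin trajectory all evolve jointly, so the sampler is not a time-homogeneous Markov chain. Transferring classical metadynamics convergence arguments (e.g.\ the Fokker--Planck analysis of well-tempered variants) to this history-dependent, KDE-driven setting will likely require either a separation-of-timescales hypothesis between Langevin mixing and bias deposition, or an explicit quasi-stationarity assumption on $\pi$. Once that ergodic limit is secured, the remainder reduces to standard Nadaraya--Watson consistency with vanishing bandwidth and a dominated-convergence argument justified by the bounded-domain and finite-discontinuity hypotheses on $r$.
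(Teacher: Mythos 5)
Your proposal follows essentially the same skeleton as the paper's proof: unfold the three limits in order, pass $t\to\infty$ to replace the empirical sums by expectations under a limiting visitation measure, recognize $\hat R/\hat N$ as a kernel-regression (Nadaraya--Watson) estimate, send $\sigma\to 0$ to localize to a level set of $z$, and finally send $\epsilon\to 0$. The paper packages the two key reductions as standalone lemmas: a ``quotient rule'' lemma (Lemma~C.2) that drops the $\epsilon$ in the denominator once both $\hat N$ and $\hat R$ diverge (your ``absorbed'' argument), and a substitution lemma (Lemma~C.3) that replaces metadynamics samples by uniform samples term-by-term, after which the ratio becomes a ratio of integrals. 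So the approach is recognizably the same; the differences are in how the hard step is handled and, more subtly, in what you claim the limit is.

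On the ergodic step: the paper's Lemma~C.3 takes as a \emph{hypothesis} that $X_i\to U$ uniformly in the strong sense that $|X_i(\omega)-U(\omega)|<\epsilon$ for all $\omega$ past some index, then does a Taylor-expansion argument to swap samples inside the sums. That hypothesis is pointwise convergence of the trajectory to a fixed uniform random variable, which is not what a biased Langevin sampler actually does (the trajectory keeps moving; only its empirical/visiting measure flattens). Your formulation in terms of a limiting visitation measure $\pi$ and an LLN for the weighted sums is the mathematically more natural way to phrase what metadynamics delivers, and you are right to flag it as the genuine obstacle: neither the paper nor your sketch proves that the joint (trajectory, bias, KDE) process is ergodic with a flat CV-marginal; the paper simply cites ``metadynamics eventually leads to uniform sampling'' and bakes it into a lemma hypothesis, and you defer it to a separation-of-timescales or quasi-stationarity assumption. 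That candor is a genuine improvement in rigor-awareness.

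On the $\sigma\to 0$ step there is a real discrepancy worth naming. Your Nadaraya--Watson limit is the \emph{ratio} $\mathbb{E}_\pi[r\,K_\sigma]/\mathbb{E}_\pi[K_\sigma]\to \mathbb{E}_\pi[r\mid z]$, i.e.\ a normalized level-set average (a conditional expectation). The paper writes the same ratio of integrals but then asserts the limit is the \emph{unnormalized} $\int_D r(x')\,\delta(z-z(x'))\,\dd x'$, implicitly taking $\int_D \delta(z-z(x'))\,\dd x'=1$, which holds only for measure-preserving or invertible $z$; your version is the more careful one. However, both you and the paper then glide over the same final gap: the theorem asserts the limit is $\int V(x)\,\delta(z-z(x))\,\dd x = -\tfrac{1}{\beta'}\int \log r(x)\,\delta(z-z(x))\,\dd x$, i.e.\ a level-set average \emph{of the log}, whereas both derivations produce $-\tfrac{1}{\beta'}\log$ of a level-set average \emph{of $r$}. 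For invertible $z$ these coincide (the delta picks out one point, so $\log\int=\int\log$), and your ``conditioning collapses'' remark handles that case cleanly. For non-invertible $z$ they do not coincide in general, and your closing sentence (``taking $-(\beta')^{-1}\log$ and letting $\epsilon\to 0$ returns $V$... by unpacking the co-area formula'') asserts the identification without resolving the $\log$-of-average versus average-of-$\log$ mismatch, which is exactly the step the paper also leaves unjustified. If you want to close the gap, you would need an extra hypothesis (e.g.\ $r$ constant on level sets of $z$, or $z$ invertible on the support of $\pi$) under which $\mathbb{E}_\pi[\log r\mid z] = \log\mathbb{E}_\pi[r\mid z]$.
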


The proof is in Appendix \ref{appx:proofs}.

% \textbf{Implementation details}:  and set the kernel width equal to the width of the Gaussian bias, . The resulting exploration algorithm we call \textit{Adapted Metadynamics} (AM), presented in Algorithm \ref{alg:GFlowNetMetadynamics}. 

Note that the algorithm can be extended to a batch of trajectories, where each metadynamics trajectory evolves independently, but with a shared $\hat{V}$ and $V_{\text{bias}}$ which receive updates from every trajectory in the batch. We found empirically that this  accelerates exploration and reduces stochastic gradient noise during training and this is the version we use in our experiments (Section \ref{sec:exp}).

% In summary, Adapted Metadynamics works as follows: Langevin dynamics of a particle is simulated in the space of terminal states of a GFlowNet using a suitable numerical integrator (such as Euler-Maruyama). At regular intervals, Gaussian potentials are deposited. At the same time, equation \eqref{eqn:kdePotential} is updated as a proxy for the original potential. The potential used in the next iteration of Langevin dynamics is the sum of these two potentials, ultimately allowing the particle to uniformly sample all reward peaks. 

\subsection{MetaGFN}
\label{sec:adaptedMetad2}

Each Adapted Metadynamics sample $x_i \in \mathcal{X}$ is an off-policy terminal state sample. To train a GFlowNet, complete trajectories are required. We generate these by backward sampling from the terminal state, giving a trajectory $\tau = (s_0, s_1, \ldots, s_n=x_i)$, where each state $s_{i-1}$ is sampled from the current backward policy distribution $\hat{p}_{B}(s_{i-1} | s_i; \theta)$, for $i$ from $n$ to $1$. This approach means that the generated trajectory $\tau$ has reasonable credit according to the loss function, thereby providing a useful learning signal. However, since this requires a backward policy, this is compatible with DB, TB, and STB losses, but not FM loss. Given the superior credit assignment of the former losses, this is not a limitation \cite{madan_learning_2022}. 

Additionally, we use a replay buffer. Due to the theoretical guarantee that Adapted Metadynamics will eventually sample all collective variable space (Theorem \ref{thm:adaptiveMetadynamics}), AM samples are ideal candidates for storing in a replay buffer. When storing these trajectories in the replay buffer, there are two obvious choices:
\begin{enumerate}
    \item Store the entire trajectory the first time it is generated;
    \item Store only the Adapted Metadynamics sample and regenerate trajectories using the current backward policy when retrieving from the replay buffer.
\end{enumerate}
We investigated both options in preliminary experiments (Appendix \ref{appx:lineEnvironmentSetup}). Option 2 was found to be uniformly superior and it is the version we use in our main experiments (Section \ref{sec:exp}). We call the overall training algorithm \textit{MetaGFN}, with pseudocode presented in Algorithm \ref{alg:trainingGFlowNet}, below.

\begin{algorithm}[]
\caption{MetaGFN}
\label{alg:trainingGFlowNet}
\SetKwInOut{Input}{Input}
\SetKwInOut{Output}{Output}
\Input{Forward policy $P_F$. Backwards policy $P_B$. Loss function $L$.}
\Parameter{How often to run Adapted Metadynamics batches, \texttt{freqMD}. How often to run replay buffer batches, \texttt{freqRB}. Batch size, $b$. Stride, $n \in \mathbb{Z}^+$. Time step, $\Delta t > 0$}

\textbf{for} each episode \textbf{do}: \\
\quad \textbf{if} episode number is divisible by \texttt{freqMD}:  \\
\quad \quad \text{Run Adapted Metadynamics (batch size $b$) for time} $n\Delta t$, obtain samples $\{x_1, \dots, x_b\}$ \\
\quad \quad \text{Push} $\{x_1, \dots, x_b\}$ \text{to the replay buffer} \\
\quad \quad Backward sample from $\{x_1, \dots, x_b\}$ using current $P_B$ to obtain trajectories $\{\tau_1, \dots, \tau_b\}$   

\quad \textbf{elif} episode number is divisible by \texttt{freqRB}: \\
\quad \quad \text{Random sample} $\{x_1, \dots, x_b\}$ from the replay buffer \\
\quad \quad Backward sample from $\{x_1, \dots, x_b\}$ using current $P_B$ to obtain trajectories $\{\tau_1, \dots, \tau_b\}$ \\

\quad \textbf{else}: \\
\quad \quad Generate trajectories $\{\tau_1, \dots, \tau_b\}$ on-policy \\

\quad Compute loss $l = \sum_{i=1}^b L(\tau_i, P_F, P_B)$ \\
\quad Take gradient step on loss $l$
\end{algorithm}

\section{Experiments}
\label{sec:exp}

We compare MetaGFN with Thompson sampling, noisy, Local Search GFlowNets and nested sampling (Section \ref{sec:explorationStrategies}). We run experiments in five continuous environments, summarised below. For each exploration strategy, we use a replay buffer and alternate between exploration and replay buffer batches. For MetaGFN, we use $\texttt{freqRB} = 2$, $\texttt{freqMD} = 10$. Forward and backward kernels are Gaussian/von Mises mixture distributions, with distribution parameters specified at each state by an MLP. In all environments, the additional computational expense of running Adapted Metadynamics was negligible (<5\%) compared to the training time of the models. Full experimental details are in Appendix \ref{appx:experimentalDetails}.

\begin{wrapfigure}[11]{R}{0.30\textwidth}
\centering
\includegraphics[width=0.30\textwidth]{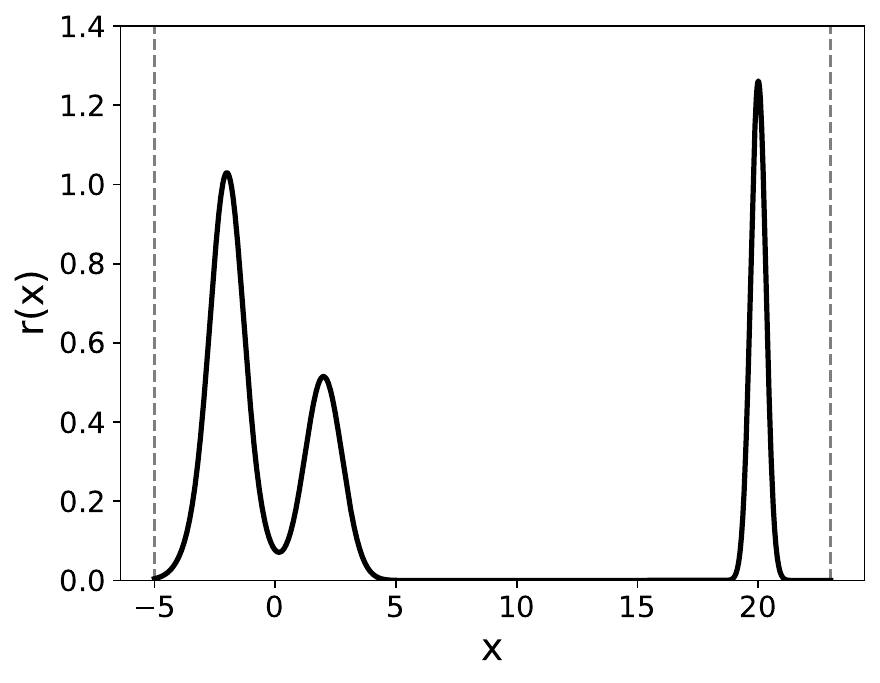}
\caption{Line environment reward density.}
\label{fig:rewardFunction}
\end{wrapfigure}

\newpage

\textbf{Line environment}: A one-dimensional environment with state space $\mathcal{S} = \mathbb{R} \times \left\{t \in \mathbb{N}, 1 \leq t \leq 3\right\}$, where $t$ indexes the position of a state in a trajectory. The source state is $s_0 = (0,0)$. The terminal states are therefore $\mathcal{X} = \mathbb{R} \times \left\{3\right\} \cong \mathbb{R}$. The collective variable is the identity, i.e. $z=x$. The reward density, plotted in Figure \ref{fig:rewardFunction}, consists of an asymmetric bimodal peak near the origin and an additional distant lone peak. It is given by the Gaussian mixture distribution: 
\begin{align}
\label{eqn:rewardFunction}
r(x) = \begin{cases}
    \mathcal{N}(-2.0,1.0) + \mathcal{N}(-2.0, 0.4) + \\
    \quad \mathcal{N}(2.0, 0.6) + \mathcal{N}(20.0, 0.1); & -5 \leq x \leq 23 \\
    0; & \text{otherwise},
\end{cases}
\end{align}
where $\mathcal{N}(\mu, \sigma^2)$ is a Gaussian density with mean $\mu$ and variance $\sigma^2$. 
\begin{wrapfigure}[13]{r}{0.30\textwidth}
\centering
\includegraphics[width=0.30\textwidth]{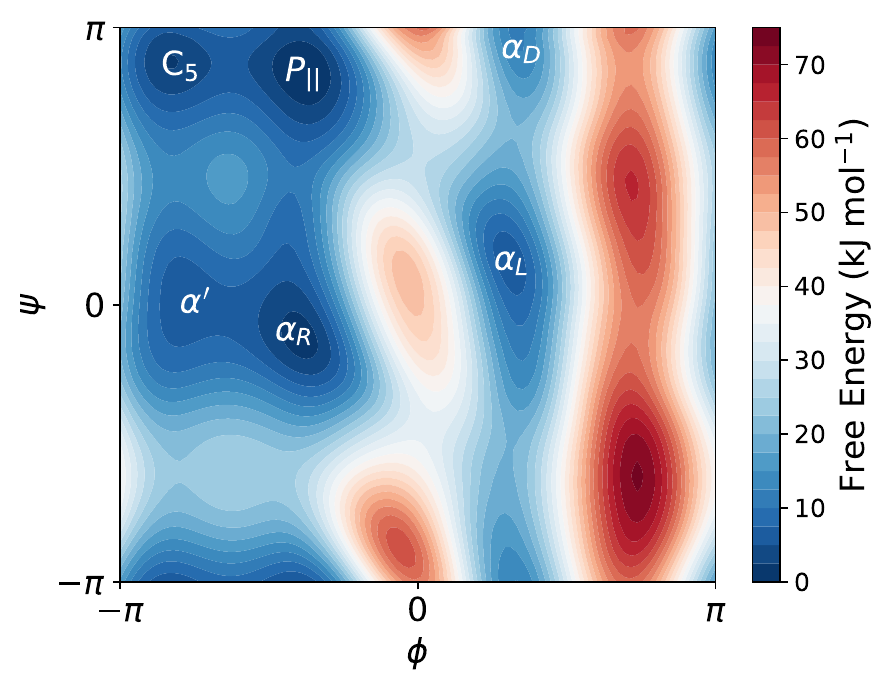}
\caption{Free energy surface of alanine dipeptide.}
\label{fig:alanineFES}
\end{wrapfigure}

\textbf{Alanine dipeptide environment}: One application of continuous GFlowNets is molecular conformation sampling \cite{volokhova_towards_2023}. Here, we train a GFlowNet to sample conformational states of alanine dipeptide (AD), a a small biomolecule of 23 atoms that plays a key role in modelling the dynamics of proteins \cite{hermans_amino_2011}. The metastable states of AD are distinguished in a 2D CV space defined by the backbone dihedral angles $\phi$ and $\psi$. The resulting free energy surface $V(\phi, \psi)$ in explicit water, obtained after extensive sampling with long molecular dynamics simulations, is shown in Figure \ref{fig:alanineFES}. The metastable states, in increasing energy, are $P_{\vert \vert}$, $\alpha_R$, $C_5$, $\alpha '$, $\alpha_L$, and $\alpha_D$. The state space is $\mathcal{S} = \mathbb{T}^2 \times \{t \in \mathbb{N}, 1 \leq t \leq 3\}$, with source state $s_0 = P_{\vert \vert } = (-1.2, 2.68)$. Terminal states are $\mathcal{X} = \mathbb{T}^2 \times \{3\} \cong \mathbb{T}^2$. The reward function is the Boltzmann weight, $r(\phi, \psi) = \frac{1}{Z}\exp(-\beta V(\phi, \psi))$, where $Z$ is the normalisation constant.

\begin{wrapfigure}[10]{R}{0.30\textwidth}
\vspace{-1.5cm}
\centering
\includegraphics[width=0.30\textwidth]{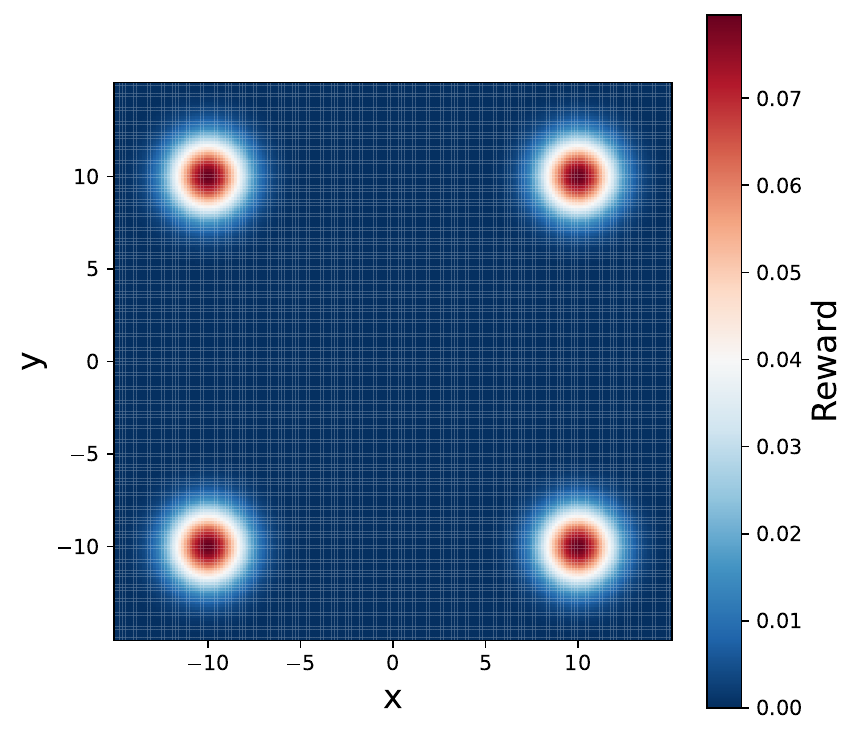}
\caption{Grid environment reward density in dimension 2.}
\label{fig:gridRewardFunction}
\end{wrapfigure}

\textbf{Grid environments}: We consider (hyper)grids in $d=2$, $3$ and $4$ dimensions. The state space is $\mathcal{S} = [-15,15]^d \times \{t \in \mathbb{N}, 1 \leq t \leq 3\}$. The $k$-dimensional (hyper)grid consists of $2^k$ modes, located at the corners of the (hyper)cube $[-10,10]^d$. The (hyper)grid is centered on the origin with an edge width of $20$. The reward modes are Gaussians with variance $\sigma^2 = 2$ (Figure \ref{fig:gridRewardFunction}). We use trajectory lengths of 3, 5, and 6 for dimensions 2, 3, and 4 respectively. The collective variable is the identity, i.e. $z=x$. The source state is the origin and terminal states are in $\mathcal{X} \cong \mathbb{R}^k$.

\newpage
\subsection{Results}

In each environment, we run experiments with three different loss functions: Detailed Balance (DB), Trajectory Balance (TB) and Subtrajectory Balance (STB). We evaluate performance by computing the L1 error between the known reward distribution and the empirical on-policy distribution during training resulting from $10^4$ independent samples. The results, averaged over $10$ repeats, are shown in Figure \ref{fig:allExperiments}. In 12 out of 15 of the experiments, MetaGFN outperformed all other exploration strategies. We give further analysis below.

\textit{Line Environment}: Among the losses, TB shows the lowest variance, and for all losses, MetaGFN always converges to a lowest error \footnote{We do not show results for nested sampling on the line environment as the implementation used did not support one-dimensional environments, see Appendix \ref{appx:experimentalDetails}.}. Indeed, MetaGFN is the only method that consistently samples the distant reward peak at $x=20$ (Appendix \ref{appx:lineEnvironmentSetup}). Although other strategies occasionally sampled the distant peak, MetaGFN alone converges because it \textit{continuously} samples this peak, even if the forward policy starts to lock onto the central modes, thus ensuring that the replay buffer is always populated with diverse samples. The slight increase in the loss of MetaGFN around batch number $5 \times 10^3$ occurs as the on-policy distribution widens when Adapted Metadynamics first discovers the distant peak. Appendix \ref{appx:lineEnvironmentSetup} provides further analysis of Adapted Metadynamics and compares different MetaGFN variants, with and without noise, and with and without trajectory regeneration. We confirm that the variant of MetaGFN presented in Figure \ref{fig:allExperiments} (no added noise, always regenerate trajectories) is the most robust.

\textit{Alanine Dipeptide Environment}: For each loss, MetaGFN generally converges to a lower error than all other exploration strategies. However, for TB loss, the average L1 error is marginally higher than on-policy training, but this conceals the fact that the best-case error is smaller. To better understand this result, we examined the best and worst training runs (as measured by L1 error) for TB on-policy and TB MetaGFN, shown in Figure \ref{fig:ADbestandworst}. Note that, unlike on-policy, the best MetaGFN run learns to sample the rare $\alpha_L$ mode. In the worst case, MetaGFN fails to converge (although this is rare; only one of 10 runs failed). In Table \ref{tab:findingModes}, we quantify how often the different AD modes were sampled over the different repeats (a mode is considered sampled if the on-policy distribution has a mode within the correct basin of attraction). TB loss with MetaGFN is the only combination that consistently samples the majority of modes. The only mode not sampled by any method is $\alpha_D$, which has a natural abundance approximately $10$ times less frequent than $\alpha_L$. 

\begin{figure}
   \centering
   \includegraphics[width=0.99\linewidth]{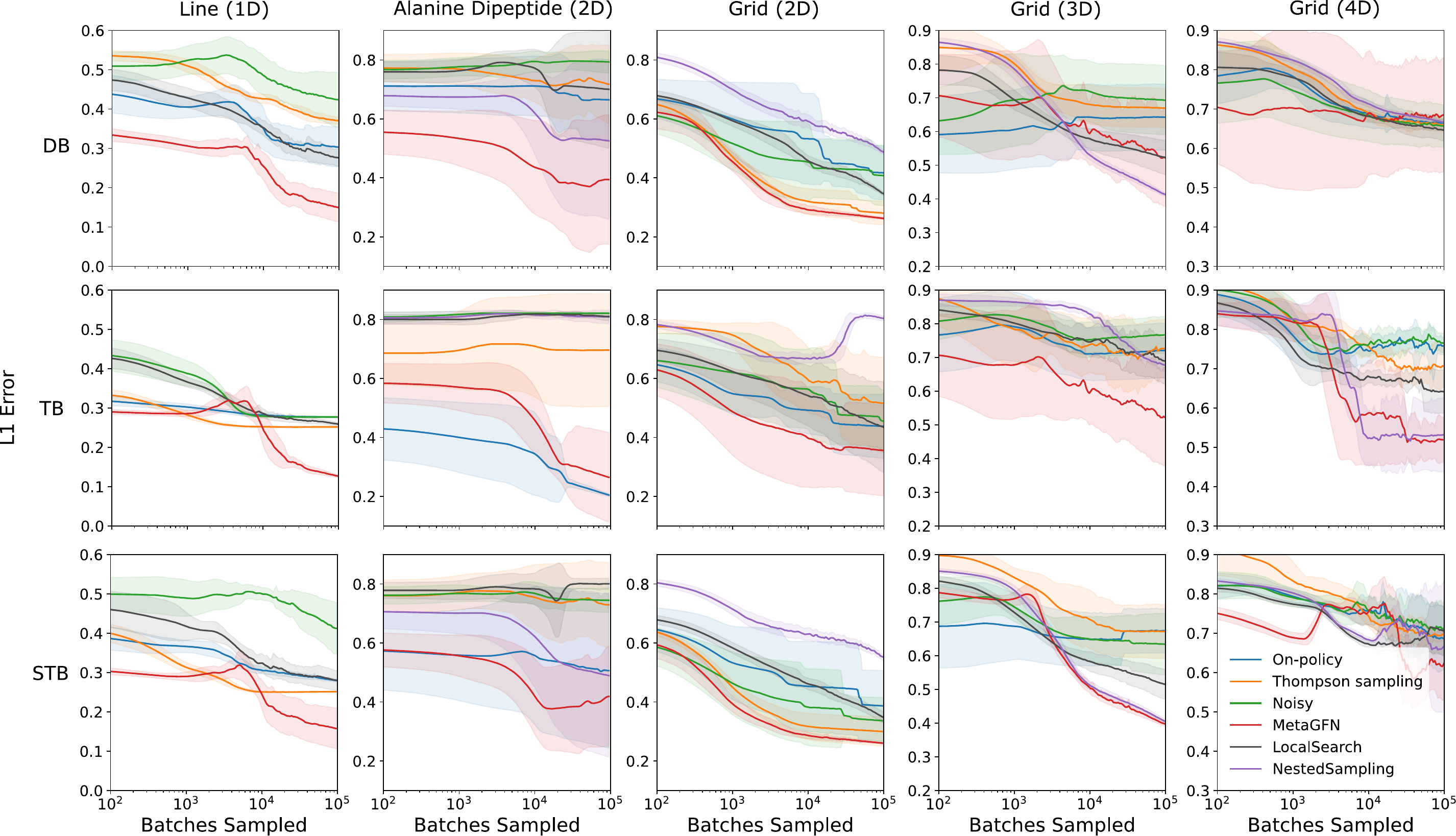}
   \caption{The L1 difference between on-policy and reward distribution during training for different loss functions and exploration strategies. The mean is plotted with standard error over 10 repeats. DB - Detailed Balance loss, TB - Trajectory Balance loss, STB - Subtrajectory Balance loss.}
   \label{fig:allExperiments}
\end{figure}

\begin{table}[htbp]
  \centering
  \caption{Number of correct samples of AD modes in trained GFlowNets over 10 independent repeats for DB, STB, and TB loss functions. OP - On-policy and MD - MetaGFN. The $\alpha_D$ mode wasn't sampled in any model due to its low natural frequency.}
  \label{tab:findingModes}
  \begin{tabular}{ccccccc}
    \toprule
    & \multicolumn{2}{c}{DB} & \multicolumn{2}{c}{STB} & \multicolumn{2}{c}{TB} \\
    \cmidrule(lr){2-3} \cmidrule(lr){4-5} \cmidrule(lr){6-7}
    & OP & MD & OP & MD & OP & MD \\
    \midrule
    $P_{\vert \vert}$ & 1 & \textbf{7} & \textbf{6} & 5 & 10 & \textbf{8} \\
    $\alpha_R$ & 6 & \textbf{9} & 7 & \textbf{10} & 10 & \textbf{9} \\
    $C_5$ & 2 & \textbf{7} & 5 & \textbf{6} & 10 & \textbf{8} \\
    $\alpha '$ & 6 & \textbf{9} & 5 & \textbf{10} & 5 & \textbf{9} \\
    $\alpha_L$ & 0 & \textbf{1} & \textbf{1} & 0 & 0 & \textbf{8} \\
    \bottomrule
  \end{tabular}
\end{table}

\begin{figure}
    \centering
    \includegraphics[width=0.95\linewidth]{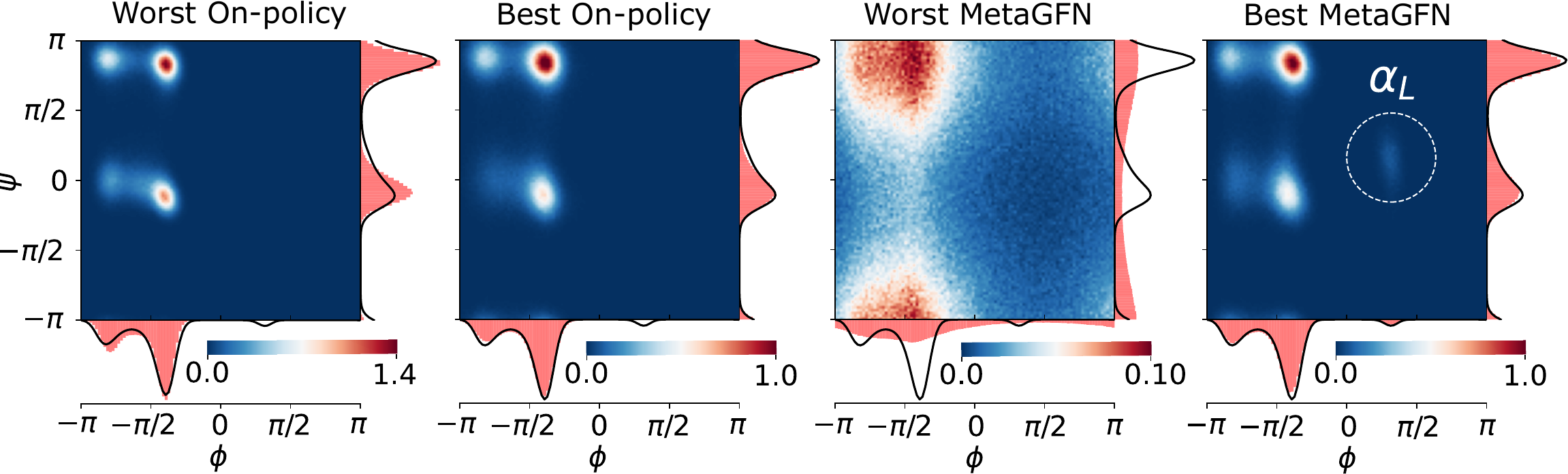}
    \caption{Learned on-policy distribution for TB on-policy and TB MetaGFN training runs. The colour bar shows the probability density. Red histograms show the marginal distribution along the angular coordinates. Black curves show the marginal distributions of the ground truth. In the best case, MetaGFN is able to learn the $\alpha_L$ mode. In the worst case, MetaGFN fails to converge. On-policy training, although more consistent, fails to learn to sample from the $\alpha_L$ mode.}
    \label{fig:ADbestandworst}
\end{figure}

\textit{Grid Environments}: MetaGFN achieves the lowest error in 7 of 9 experiments, although note that the performance of all methods generally decreases with increasing dimension. This is unsurprising, since the learning task becomes more difficult. For MetaGFN, this is also an expected outcome of the curse of dimensionality of the replay buffer grid; in high-dimensional spaces metadynamics samples encounter high-reward modes less frequently, leading to a replay buffer with less sample diversity. This is a known limitation of metadynamics \cite{laio_escaping_2002}. Nested Sampling outperforms MetaGFN in one case (Grid 3D, DB loss) but relies on costly MCMC-based pre-training that scales exponentially with dimension and requires reward gradients, unlike MetaGFN. However, the generally favourable performance of MetaGFN could be due to the method discovering an overall larger diversity of off-policy samples, that are refreshed and updated during training, in contrast to the static samples when using Nested Sampling approach.

\section{Limitations}
\label{sec:limitations}

For metadynamics to be an effective sampler, the CVs must be low-dimensional and bounded. Either these CVs should be known analytically (as in our experiments) or learnt from data \cite{sidky_machine_2020}. Alternatively, CVs could be learnt adaptively by parameterising $z(x;\theta)$ by a neural network and updating its parameters by back-propagating through the GFlowNet loss during training. The metadynamics algorithm could also be replaced with a variant with smoother convergence properties, such as well-tempered metadynamics \cite{barducci_well-tempered_2008} or on-the-fly probability enhanced sampling (OPES) \cite{invernizzi_opes_2021}. We leave these as extensions for future work.

\section{Conclusions}
\label{sec:Conclusions}

While exploration strategies for discrete Generative Flow Networks (GFlowNets) have received extensive attention, the methodologies for continuous GFlowNets remain relatively underexplored. To address this gap, we illustrated how metadynamics, a widely used enhanced sampling technique in molecular dynamics, can be adapted as an effective exploration strategy for continuous GFlowNets. 

In molecular dynamics, atomic forces can be computed as the gradient of the potential, whereas continuous GFlowNets tackle problems where the reward function is a black box and gradients are inaccessible. We demonstrated how the method could be adapted by updating a kernel density estimate of the reward function on-the-fly, and proved that this is guaranteed to explore the space in an appropriate limit. Our empirical investigations show that MetaGFN offers a computationally efficient means to explore new modes in environments where prior knowledge of collective variables exists. Importantly, this work advocates an approach wherein techniques derived from molecular modelling can be adapted for machine learning tasks. %We choose standard metadynamics because of its simplicity and historical significance. However, one could also consider adapting more recent variants such as well-tempered metadynamics or on-the-fly probability enhanced sampling (OPES) to the GFlowNet setting. 
Looking ahead, we anticipate that this could be a fruitful area of cross-disciplinary research, where existing ideas from the enhanced sampling literature can find further applications in generative modelling and reinforcement learning.

%Bibliography
\bibliographystyle{unsrt}  
\bibliography{references}  

\begin{thebibliography}{10}

\bibitem{bengio_flow_2021}
Emmanuel Bengio, Moksh Jain, Maksym Korablyov, Doina Precup, and Y.~Bengio.
\newblock Flow {Network} based {Generative} {Models} for {Non}-{Iterative} {Diverse} {Candidate} {Generation}.
\newblock {\em ArXiv}, June 2021.

\bibitem{malkin_trajectory_2022}
Nikolay Malkin, Moksh Jain, Emmanuel Bengio, Chen Sun, and Y.~Bengio.
\newblock Trajectory {Balance}: {Improved} {Credit} {Assignment} in {GFlowNets}.
\newblock {\em ArXiv}, January 2022.

\bibitem{zhang_generative_2022}
Dinghuai Zhang, Nikolay Malkin, Zhen Liu, Alexandra Volokhova, Aaron Courville, and Yoshua Bengio.
\newblock Generative {Flow} {Networks} for {Discrete} {Probabilistic} {Modeling}.
\newblock In {\em Proceedings of the 39th {International} {Conference} on {Machine} {Learning}}, pages 26412--26428. PMLR, June 2022.
\newblock ISSN: 2640-3498.

\bibitem{jain_biological_2022}
Moksh Jain, Emmanuel Bengio, Alex Hernandez-Garcia, Jarrid Rector-Brooks, Bonaventure F.~P. Dossou, Chanakya~Ajit Ekbote, Jie Fu, Tianyu Zhang, Michael Kilgour, Dinghuai Zhang, Lena Simine, Payel Das, and Yoshua Bengio.
\newblock Biological {Sequence} {Design} with {GFlowNets}.
\newblock In {\em Proceedings of the 39th {International} {Conference} on {Machine} {Learning}}, pages 9786--9801. PMLR, June 2022.
\newblock ISSN: 2640-3498.

\bibitem{deleu_bayesian_2022}
Tristan Deleu, {António} {Góis}, Chris Emezue, Mansi Rankawat, Simon Lacoste-Julien, Stefan Bauer, and Yoshua Bengio.
\newblock Bayesian {Structure} {Learning} with {Generative} {Flow} {Networks}, June 2022.
\newblock arXiv:2202.13903 [cs, stat].

\bibitem{jain_multi-objective_2023}
Moksh Jain, Sharath~Chandra Raparthy, Alex Hernandez-Garcia, Jarrid Rector-Brooks, Yoshua Bengio, Santiago Miret, and Emmanuel Bengio.
\newblock Multi-{Objective} {GFlowNets}, July 2023.
\newblock arXiv:2210.12765 [cs, stat].

\bibitem{hu_gflownet-em_2023}
Edward~J. Hu, Nikolay Malkin, Moksh Jain, Katie Everett, Alexandros Graikos, and Yoshua Bengio.
\newblock {GFlowNet}-{EM} for learning compositional latent variable models, June 2023.
\newblock arXiv:2302.06576 [cs, stat].

\bibitem{zhang_robust_2023}
David~W. Zhang, Corrado Rainone, Markus Peschl, and Roberto Bondesan.
\newblock Robust {Scheduling} with {GFlowNets}, February 2023.
\newblock arXiv:2302.05446 [cs].

\bibitem{shen_tacogfn_2023}
Tony Shen, Mohit Pandey, Jason Smith, Artem Cherkasov, and Martin Ester.
\newblock {TacoGFN}: {Target} {Conditioned} {GFlowNet} for {Structure}-{Based} {Drug} {Design}, December 2023.
\newblock arXiv:2310.03223 [cs].

\bibitem{cipcigan_discovery_2023}
Flaviu Cipcigan, Jonathan Booth, Rodrigo Neumann~Barros Ferreira, Carine Ribeiro~dos Santos, and Mathias Steiner.
\newblock Discovery of {Novel} {Reticular} {Materials} for {Carbon} {Dioxide} {Capture} using {GFlowNets}, October 2023.
\newblock arXiv:2310.07671 [cond-mat].

\bibitem{lahlou_theory_2023}
Salem Lahlou, Tristan Deleu, Pablo Lemos, Dinghuai Zhang, Alexandra Volokhova, Alex {Hernández}-{García}, {Léna}~{Néhale} Ezzine, Yoshua Bengio, and Nikolay Malkin.
\newblock A theory of continuous generative flow networks.
\newblock {\em Proceedings of the 40th International Conference on Machine Learning}, 2023.
\newblock Publisher: arXiv Version Number: 2.

\bibitem{volokhova_towards_2023}
Alexandra Volokhova, {Michał} Koziarski, Alex {Hernández}-{García}, Cheng-Hao Liu, Santiago Miret, Pablo Lemos, Luca Thiede, Zichao Yan, {Alán} Aspuru-Guzik, and Yoshua Bengio.
\newblock Towards equilibrium molecular conformation generation with {GFlowNets}, October 2023.
\newblock arXiv:2310.14782 [cs].

\bibitem{luo_multi-agent_2024}
Shuang Luo, Yinchuan Li, Shunyu Liu, Xu~Zhang, Yunfeng Shao, and Chao Wu.
\newblock Multi-agent {Continuous} {Control} with {Generative} {Flow} {Networks}.
\newblock {\em Neural Networks}, 174:106243, June 2024.

\bibitem{bengio_gflownet_2021}
Yoshua Bengio, T.~Deleu, J.~E. Hu, Salem Lahlou, Mo~Tiwari, and Emmanuel Bengio.
\newblock {GFlowNet} {Foundations}.
\newblock {\em ArXiv}, November 2021.

\bibitem{madan_learning_2022}
Kanika Madan, Jarrid Rector-Brooks, Maksym Korablyov, Emmanuel Bengio, Moksh Jain, Andrei Nica, Tom Bosc, Yoshua Bengio, and Nikolay Malkin.
\newblock Learning {GFlowNets} from partial episodes for improved convergence and stability.
\newblock {\em Proceedings of the 40th International Conference on Machine Learning}, 2022.
\newblock Version Number: 3.

\bibitem{pan_better_2023}
Ling Pan, Nikolay Malkin, Dinghuai Zhang, and Yoshua Bengio.
\newblock Better {Training} of {GFlowNets} with {Local} {Credit} and {Incomplete} {Trajectories}.
\newblock In {\em Proceedings of the 40th {International} {Conference} on {Machine} {Learning}}, pages 26878--26890. PMLR, July 2023.
\newblock ISSN: 2640-3498.

\bibitem{pan_generative_2022}
Ling Pan, Dinghuai Zhang, Aaron Courville, Longbo Huang, and Yoshua Bengio.
\newblock Generative {Augmented} {Flow} {Networks}, October 2022.
\newblock arXiv:2210.03308 [cs].

\bibitem{rector-brooks_thompson_2023}
Jarrid Rector-Brooks, Kanika Madan, Moksh Jain, Maksym Korablyov, Cheng-Hao Liu, Sarath Chandar, Nikolay Malkin, and Yoshua Bengio.
\newblock Thompson sampling for improved exploration in {GFlowNets}, June 2023.
\newblock arXiv:2306.17693 [cs].

\bibitem{kim_local_2024}
Minsu Kim, Taeyoung Yun, Emmanuel Bengio, Dinghuai Zhang, Yoshua Bengio, Sungsoo Ahn, and Jinkyoo Park.
\newblock Local {Search} {GFlowNets}, March 2024.
\newblock arXiv:2310.02710 [cs, stat].

\bibitem{hawkins_conformation_2017}
Paul C.~D. Hawkins.
\newblock Conformation {Generation}: {The} {State} of the {Art}.
\newblock {\em Journal of Chemical Information and Modeling}, 57(8):1747--1756, August 2017.
\newblock Publisher: American Chemical Society.

\bibitem{yang_enhanced_2019}
Yi~Isaac Yang, Qiang Shao, Jun Zhang, Lijiang Yang, and Yi~Qin Gao.
\newblock Enhanced sampling in molecular dynamics.
\newblock {\em The Journal of Chemical Physics}, 151(7):070902, August 2019.

\bibitem{shahriari_taking_2016}
Bobak Shahriari, Kevin Swersky, Ziyu Wang, Ryan~P. Adams, and Nando De~Freitas.
\newblock Taking the {Human} {Out} of the {Loop}: {A} {Review} of {Bayesian} {Optimization}.
\newblock {\em Proceedings of the IEEE}, 104(1):148--175, January 2016.

\bibitem{sendera_improved_2024}
Marcin Sendera, Minsu Kim, Sarthak Mittal, Pablo Lemos, Luca Scimeca, Jarrid Rector-Brooks, Alexandre Adam, Yoshua Bengio, and Nikolay Malkin.
\newblock Improved off-policy training of diffusion samplers, May 2024.
\newblock arXiv:2402.05098 [cs, stat].

\bibitem{rengarajan_reinforcement_2022}
Desik Rengarajan, Gargi Vaidya, Akshay Sarvesh, Dileep Kalathil, and Srinivas Shakkottai.
\newblock Reinforcement {Learning} with {Sparse} {Rewards} using {Guidance} from {Offline} {Demonstration}, February 2022.
\newblock arXiv:2202.04628.

\bibitem{abrams_enhanced_2014}
Cameron Abrams and Giovanni Bussi.
\newblock Enhanced {Sampling} in {Molecular} {Dynamics} {Using} {Metadynamics}, {Replica}-{Exchange}, and {Temperature}-{Acceleration}.
\newblock {\em Entropy}, 16(1):163--199, January 2014.
\newblock Number: 1 Publisher: Multidisciplinary Digital Publishing Institute.

\bibitem{laio_escaping_2002}
Alessandro Laio and Michele Parrinello.
\newblock Escaping free-energy minima.
\newblock {\em Proceedings of the National Academy of Sciences}, 99(20):12562--12566, October 2002.
\newblock Publisher: Proceedings of the National Academy of Sciences.

\bibitem{fiorin_using_2013}
Giacomo Fiorin, Michael~L. Klein, and Jérôme Hénin.
\newblock Using collective variables to drive molecular dynamics simulations.
\newblock {\em Molecular Physics}, 111(22-23):3345--3362, December 2013.
\newblock Publisher: Taylor \& Francis \_eprint: https://doi.org/10.1080/00268976.2013.813594.

\bibitem{nummelin_general_1984}
Esa Nummelin.
\newblock {\em General {Irreducible} {Markov} {Chains} and {Non}-{Negative} {Operators}}.
\newblock Cambridge {Tracts} in {Mathematics}. Cambridge University Press, Cambridge, 1984.

\bibitem{bortolussi_hybrid_2008}
Luca Bortolussi and Alberto Policriti.
\newblock Hybrid {Systems} and {Biology}.
\newblock In Marco Bernardo, Pierpaolo Degano, and Gianluigi Zavattaro, editors, {\em Formal {Methods} for {Computational} {Systems} {Biology}}, pages 424--448, Berlin, Heidelberg, 2008. Springer Berlin Heidelberg.

\bibitem{swiler_surrogate_2012}
Laura~Painton Swiler, Patricia~Diane Hough, Peter Qian, Xu~Xu, Curtis~B. Storlie, and Herbert K.~H. Lee.
\newblock Surrogate models for mixed discrete-continuous variables.
\newblock Technical Report SAND2012-0491, Sandia National Laboratories (SNL), Albuquerque, NM, and Livermore, CA (United States), August 2012.

\bibitem{neunert_continuous-discrete_2020}
Michael Neunert, Abbas Abdolmaleki, Markus Wulfmeier, Thomas Lampe, Tobias Springenberg, Roland Hafner, Francesco Romano, Jonas Buchli, Nicolas Heess, and Martin Riedmiller.
\newblock Continuous-{Discrete} {Reinforcement} {Learning} for {Hybrid} {Control} in {Robotics}.
\newblock In {\em Proceedings of the {Conference} on {Robot} {Learning}}, pages 735--751. PMLR, May 2020.
\newblock ISSN: 2640-3498.

\bibitem{malkin_gflownets_2023}
Nikolay Malkin, Salem Lahlou, Tristan Deleu, Xu~Ji, Edward Hu, Katie Everett, Dinghuai Zhang, and Yoshua Bengio.
\newblock {GFlowNets} and variational inference, March 2023.
\newblock arXiv:2210.00580 [cs, stat].

\bibitem{shen_towards_2023}
Max~W. Shen, Emmanuel Bengio, Ehsan Hajiramezanali, Andreas Loukas, Kyunghyun Cho, and Tommaso Biancalani.
\newblock Towards {Understanding} and {Improving} {GFlowNet} {Training}.
\newblock {\em Proceedings of the 40th International Conference on Machine Learning}, 2023.
\newblock Publisher: arXiv Version Number: 1.

\bibitem{lemos_improving_2023}
Pablo Lemos, Nikolay Malkin, Will Handley, Yoshua Bengio, Yashar Hezaveh, and Laurence Perreault-Levasseur.
\newblock Improving {Gradient}-guided {Nested} {Sampling} for {Posterior} {Inference}, December 2023.
\newblock arXiv:2312.03911 [cs, stat].

\bibitem{pavliotis_stochastic_2014}
Grigorios~A. Pavliotis.
\newblock {\em Stochastic {Processes} and {Applications}: {Diffusion} {Processes}, the {Fokker}-{Planck} and {Langevin} {Equations}}, volume~60 of {\em Texts in {Applied} {Mathematics}}.
\newblock Springer, New York, NY, 2014.

\bibitem{laio_metadynamics_2008}
Alessandro Laio and Francesco~L. Gervasio.
\newblock Metadynamics: a method to simulate rare events and reconstruct the free energy in biophysics, chemistry and material science.
\newblock {\em Reports on Progress in Physics}, 71(12):126601, November 2008.

\bibitem{de_vivo_role_2016}
Marco De~Vivo, Matteo Masetti, Giovanni Bottegoni, and Andrea Cavalli.
\newblock Role of {Molecular} {Dynamics} and {Related} {Methods} in {Drug} {Discovery}.
\newblock {\em Journal of Medicinal Chemistry}, 59(9):4035--4061, May 2016.
\newblock Publisher: American Chemical Society.

\bibitem{molgedey_separation_1994}
L.~Molgedey and H.~G. Schuster.
\newblock Separation of a mixture of independent signals using time delayed correlations.
\newblock {\em Physical Review Letters}, 72(23):3634--3637, June 1994.

\bibitem{mardt_vampnets_2018}
Andreas Mardt, Luca Pasquali, Hao Wu, and Frank Noé.
\newblock {VAMPnets} for deep learning of molecular kinetics.
\newblock {\em Nature Communications}, 9(1):5, January 2018.
\newblock Publisher: Nature Publishing Group.

\bibitem{bonati_deep_2021}
Luigi Bonati, GiovanniMaria Piccini, and Michele Parrinello.
\newblock Deep learning the slow modes for rare events sampling.
\newblock {\em Proceedings of the National Academy of Sciences}, 118(44):e2113533118, November 2021.
\newblock Publisher: Proceedings of the National Academy of Sciences.

\bibitem{ramaswamy_deep_2021}
Venkata~K. Ramaswamy, Samuel~C. Musson, Chris~G. Willcocks, and Matteo~T. Degiacomi.
\newblock Deep {Learning} {Protein} {Conformational} {Space} with {Convolutions} and {Latent} {Interpolations}.
\newblock {\em Physical Review X}, 11(1):011052, March 2021.
\newblock Publisher: American Physical Society.

\bibitem{sidky_machine_2020}
Hythem Sidky, Wei Chen, and Andrew~L. Ferguson.
\newblock Machine learning for collective variable discovery and enhanced sampling in biomolecular simulation.
\newblock {\em Molecular Physics}, 118(5), March 2020.
\newblock Institution: Argonne National Laboratory (ANL), Argonne, IL (United States) Publisher: Taylor \& Francis.

\bibitem{bussi_using_2020}
Giovanni Bussi and Alessandro Laio.
\newblock Using metadynamics to explore complex free-energy landscapes.
\newblock {\em Nature Reviews Physics}, 2:200--212, March 2020.
\newblock ADS Bibcode: 2020NatRP...2..200B.

\bibitem{hermans_amino_2011}
Jan Hermans.
\newblock The amino acid dipeptide: {Small} but still influential after 50 years.
\newblock {\em Proceedings of the National Academy of Sciences}, 108(8):3095--3096, February 2011.
\newblock Publisher: Proceedings of the National Academy of Sciences.

\bibitem{barducci_well-tempered_2008}
Alessandro Barducci, Giovanni Bussi, and Michele Parrinello.
\newblock Well-{Tempered} {Metadynamics}: {A} {Smoothly} {Converging} and {Tunable} {Free}-{Energy} {Method}.
\newblock {\em Physical Review Letters}, 100(2):020603, January 2008.

\bibitem{invernizzi_opes_2021}
Michele Invernizzi.
\newblock {OPES}: {On}-the-fly {Probability} {Enhanced} {Sampling} {Method}.
\newblock {\em Il Nuovo Cimento C}, 44(405):1--4, September 2021.
\newblock arXiv:2101.06991 [physics].

\bibitem{jorgensen_comparison_1983}
William~L. Jorgensen, Jayaraman Chandrasekhar, Jeffry~D. Madura, Roger~W. Impey, and Michael~L. Klein.
\newblock Comparison of simple potential functions for simulating liquid water.
\newblock {\em The Journal of Chemical Physics}, 79(2):926--935, July 1983.

\bibitem{bonomi_promoting_2019}
Massimiliano Bonomi, Giovanni Bussi, Carlo Camilloni, Gareth~A. Tribello, Pavel {Banáš}, Alessandro Barducci, Mattia Bernetti, Peter~G. Bolhuis, Sandro Bottaro, Davide Branduardi, Riccardo Capelli, Paolo Carloni, Michele Ceriotti, Andrea Cesari, Haochuan Chen, Wei Chen, Francesco Colizzi, Sandip De, Marco De~La~Pierre, Davide Donadio, Viktor Drobot, Bernd Ensing, Andrew~L. Ferguson, Marta Filizola, James~S. Fraser, Haohao Fu, Piero Gasparotto, Francesco~Luigi Gervasio, Federico Giberti, Alejandro Gil-Ley, Toni Giorgino, Gabriella~T. Heller, Glen~M. Hocky, Marcella Iannuzzi, Michele Invernizzi, Kim~E. Jelfs, Alexander Jussupow, Evgeny Kirilin, Alessandro Laio, Vittorio Limongelli, Kresten Lindorff-Larsen, Thomas {Löhr}, Fabrizio Marinelli, Layla Martin-Samos, Matteo Masetti, Ralf Meyer, Angelos Michaelides, Carla Molteni, Tetsuya Morishita, Marco Nava, and {The PLUMED consortium}.
\newblock Promoting transparency and reproducibility in enhanced molecular simulations.
\newblock {\em Nature Methods}, 16(8):670--673, August 2019.

\bibitem{eastman_openmm_2017}
Peter Eastman, Jason Swails, John~D. Chodera, Robert~T. McGibbon, Yutong Zhao, Kyle~A. Beauchamp, Lee-Ping Wang, Andrew~C. Simmonett, Matthew~P. Harrigan, Chaya~D. Stern, Rafal~P. Wiewiora, Bernard~R. Brooks, and Vijay~S. Pande.
\newblock {OpenMM} 7: {Rapid} development of high performance algorithms for molecular dynamics.
\newblock {\em PLoS computational biology}, 13(7):e1005659, July 2017.

\bibitem{salomon-ferrer_overview_2013}
Romelia Salomon-Ferrer, David~A. Case, and Ross~C. Walker.
\newblock An overview of the {Amber} biomolecular simulation package.
\newblock {\em WIREs Computational Molecular Science}, 3(2):198--210, 2013.
\newblock \_eprint: https://onlinelibrary.wiley.com/doi/pdf/10.1002/wcms.1121.

\end{thebibliography}

\appendix

\section{Loss functions}
\label{appx:lossFunctions}

\noindent
For a complete trajectory $\tau$, the \textit{detailed balanced loss} (DB) is 
\[
L_{DB}(\tau) = \sum_{t=0}^{n-1} \left(\log \frac{\hat{f}(s_t; \theta)\hat{p}_F(s_t, s_{t+1}; \theta)}{\hat{f}(s_{t+1}; \theta)\hat{p}_B(s_{t+1}, s_t; \theta)} \right)^2,
\]
where $\hat{f}(s_{t+1}; \theta)$ is replaced with $r(s_n)$ if $s_n$ is terminal. 

The \textit{subtrajectory balance loss} (STB) is
\[
L_{STB}(\tau) = \frac{\sum_{0 \leq i < j \leq n} \lambda^{j-1} \mathcal{L}_{TB}(\tau_{i:j})}{\sum_{0 \leq i < j \leq n } \lambda^{j-i}},
\]
\[
\mathcal{L}_{STB}(\tau_{i : j}) \vcentcolon= \left(\log \frac{\hat{f}(s_i; \theta) \prod_{t=i}^{j-1} \hat{p}_F(s_{t+1} | s_t; \theta)}{\hat{f}(s_j; \theta) \prod_{t=i+1}^{j} \hat{p}_B(s_{t-1} | s_t; \theta)} \right)^2,
\]
where $\hat{f}(s_j; \theta)$ is replaced with $r(s_j)$ if $s_j$ is terminal. In the above, $ \lambda < 0$ is a hyperparameter. The limit $\lambda \rightarrow 0^+$ leads to average detailed balance. The $\lambda \rightarrow \infty$ limit gives the trajectory balance objective. We use $\lambda = 0.9$ in our experiments. 

\section{Langevin dynamics}
\label{appx:langevinDynamics}

Langevin dynamics (LD), is defined through the Stochastic Differential Equation (SDE):
\begin{align}
\label{eqn:langevinDynamics}
\mathrm{d}x = M^{-1} p \mathrm{d}t \\
\mathrm{d}p = F(x) \mathrm{d}t - \gamma p \mathrm{d}t + \sqrt{2 \gamma \beta^{-1}} M^{1/2} \mathrm{d}W.
\end{align} 
In the above, $x, p \in \mathbb{R}^D$ are vectors of instantaneous position and momenta respectively, $F \vcentcolon \mathbb{R}^D \rightarrow \mathbb{R}^D$ is a force function, $W(t)$ is a vector of $D$ independent Wiener processes, $M \in \mathbb{R}^D \times \mathbb{R}^D$ is a constant diagonal mass matrix, and $\gamma, \beta > 0$ are constant scalars which can be interpreted as a friction coefficient and inverse temperature respectively. In conventional Langevin dynamics, the force function is given by the gradient of the potential energy function, $F(x) = - \nabla V(x)$, where $V \vcentcolon \mathbb{R}^D \rightarrow \mathbb{R}$ and the dynamics are ergodic with respect to the Gibbs-Boltzmann density
\[
\rho_\beta (x, p) \propto e^{- \beta H(x, p)},
\]
where $H(x, p) = p^T M^{-1} p / 2 + V(x)$ is the Hamiltonian. Since the Hamiltonian is separable in position and momenta terms, the marginal Gibbs-Boltzmann density is position space is simply $\rho_{\beta}(x) \propto e^{-\beta V(x)}$, often refered to as the \textit{Gibbs density}.

We present the Euler-Maruyama numerican scheme for integrating equations \ref{eqn:langevinDynamics} below. This is called in line \ref{line:langevinDynamics} of Adapted Metadynamics (Algorithm \ref{alg:GFlowNetMetadynamics}).

\begin{algorithm}[H]
\caption{Euler-Maruyama Langevin Dynamics Step}\label{alg:EMLDstep}
    \SetKwInOut{Input}{Input}
\SetKwInOut{Output}{Output}
\Input{Current state $(x_t, p_t)$. Force $F$.}
\Parameter{Friction coefficient $\gamma$. Thermodynamic beta $\beta$. Timestep $\Delta t$.}
\Output{State $(x_{t+\Delta t}, p_{t+\Delta t})$ at the next timestep.}

Sample a random vector $R$, with the same dimension as $x_t$, where each element is an independent sample from a standard normal.  \\
$x_{t+\Delta t} = x_{t} + p_t \Delta t$ \\ 
$p_{t+\Delta t} = p_{t} +F \Delta t - \gamma p_t  \Delta t + \sqrt{2 \gamma \Delta t/ \beta} \cdot R$ 

\textbf{return} $(x_{t+\Delta t}, p_{t+\Delta t})$

\end{algorithm}

\section{Proofs}
\label{appx:proofs}

\begin{lemma}
    \label{lemma:quotientRule}
    Let $(f_n(x))$ and $(g_n(x))$ be sequences of real functions where $\lim_{n \rightarrow \infty} f_n(x) = \infty$, $\lim_{n \rightarrow \infty} g_n(x) = \infty$ and $\lim_{n \rightarrow \infty} \frac{f_n(x)}{g_n(x)} = h(x)$. Then, for all $\epsilon > 0$, we have $\lim_{n \rightarrow \infty} \frac{f_n(x)}{g_n(x) + \epsilon} = h(x)$.
\end{lemma}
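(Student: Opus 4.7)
The plan is to rewrite the quotient so that the hypothesised ratio $f_n(x)/g_n(x)$ appears explicitly as a factor, and then use the fact that adding the bounded constant $\epsilon$ to a sequence diverging to $+\infty$ does not affect the asymptotics. Concretely, I would observe that for $n$ large enough $g_n(x) > 0$ (since $g_n(x) \to \infty$), so the following algebraic identity is valid:
\[
\frac{f_n(x)}{g_n(x) + \epsilon} \;=\; \frac{f_n(x)}{g_n(x)} \cdot \frac{g_n(x)}{g_n(x) + \epsilon} \;=\; \frac{f_n(x)}{g_n(x)} \cdot \frac{1}{1 + \epsilon/g_n(x)}.
\]

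The argument then splits into two standard limit computations. The first factor tends to $h(x)$ by hypothesis. For the second factor, because $g_n(x) \to \infty$, we have $\epsilon/g_n(x) \to 0$ pointwise, hence $1/(1 + \epsilon/g_n(x)) \to 1$. Invoking the product rule for limits of real sequences yields that the right-hand side converges to $h(x) \cdot 1 = h(x)$, which is exactly the conclusion.

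I do not anticipate a real obstacle here — the result is essentially an algebraic rearrangement plus the standard arithmetic of limits. The only minor items worth checking in the write-up are that the manipulation is carried out at indices $n$ large enough to ensure $g_n(x) > 0$, and that the product rule is applied in the right form: since the second factor has the finite, nonzero limit $1$, the conclusion remains valid whether $h(x)$ is finite, $+\infty$, or $-\infty$. Because the hypothesis is stated pointwise in $x$, no uniformity needs to be established, and the same pointwise proof suffices.
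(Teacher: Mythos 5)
Your proof is correct and follows essentially the same route as the paper's: both rewrite $\frac{f_n}{g_n+\epsilon}$ as $\frac{f_n}{g_n}\cdot\frac{1}{1+\epsilon/g_n}$ and then apply the product rule for limits. You additionally flag that the rearrangement requires $g_n(x)>0$ for large $n$, a small point the paper leaves implicit.
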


\begin{proof}
    \[
    \lim_{n \rightarrow \infty} \frac{f_n(x)}{g_n(x) + \epsilon} = \lim_{n \rightarrow \infty} \frac{f_n(x)}{g_n(x)} \frac{1}{1 + \epsilon / g_n(x)}
    \]
    and the right-hand side is the product of two functions whose limit exists so, by the product rule of limits
    \[
    \lim_{n \rightarrow \infty} \frac{f_n(x)}{g_n(x)} \lim_{n \rightarrow \infty} \frac{1}{1 + \epsilon / g_n(x)} = h(x) \cdot 1 = h(x),
    \]
    so done.
\end{proof}

\begin{lemma}
    \label{lemma:uniformConvergence}
    Let $(X_i)$ be a sequence of continuous random variables that take values on a bounded domain $D \subset \mathbb{R}^d$ that asymptotically approaches the uniform random variable $U$ on $D$, i.e. $X_i \rightarrow U$ uniformly. Further, suppose
    \[
    h(x) \vcentcolon= \frac{\sum_{i=1}^\infty f(x_i) g(x, x_i)}{\sum_{i=1}^{\infty} g(x, x_i)}
    \]
    exists, where $x_i \in D$ is a sample from $X_i$ and $f(x)$ and $g(x, x')$ are analytic functions on $D$ and $D \times D$ respectively. Then,
    \[
    h(x) = \frac{\sum_{i=1}^\infty f(u_i) g(x, u_i)}{\sum_{i=1}^{\infty} g(x, u_i)},
    \]
    where the $u_i$ are samples from $U$. We make no assumption of independence of samples.
\end{lemma}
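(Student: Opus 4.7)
The plan is to recognize both sides of the claimed equality as limits of empirical weighted averages and to show that each such limit depends only on the limiting distribution of the samples. Since by hypothesis both $(x_i)$ and $(u_i)$ have uniform limiting distribution on the bounded domain $D$, the two expressions must coincide. Concretely, I would rewrite the partial ratio through the normalized empirical measure $\mu_N := \frac{1}{N}\sum_{i=1}^N \delta_{x_i}$, giving
$$h_N(x) \;=\; \frac{\sum_{i=1}^N f(x_i)\, g(x, x_i)}{\sum_{i=1}^N g(x, x_i)} \;=\; \frac{\int_D f(y)\, g(x, y)\, \dd \mu_N(y)}{\int_D g(x, y)\, \dd \mu_N(y)},$$
so that $h(x) = \lim_{N \to \infty} h_N(x)$ by hypothesis.

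Next, I would establish that $\mu_N \to \mu_U$ weakly, where $\mu_U$ is the uniform measure on $D$. Since $X_i \to U$ uniformly, for any $\varepsilon > 0$ there is an $N_0$ beyond which the laws of the $X_i$ are within $\varepsilon$ of $U$. The first $N_0$ terms contribute $O(1/N)$ to the normalized empirical measure and vanish in the limit, while the tail terms yield Cesaro-averages of samples drawn from distributions arbitrarily close to $U$, so $\int_D \phi\, \dd \mu_N \to \frac{1}{|D|}\int_D \phi(y)\, \dd y$ for any continuous test function $\phi$. Since $f$ and $g$ are analytic on the compact domain $D$, both $y \mapsto f(y) g(x, y)$ and $y \mapsto g(x, y)$ are continuous and bounded there. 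Applying the weak convergence to each, and assuming the denominator limit is nonzero, yields
$$h(x) \;=\; \frac{\int_D f(y)\, g(x, y)\, \dd y}{\int_D g(x, y)\, \dd y}.$$
This expression is independent of the particular sample sequence. Running the identical argument with $(u_i)$ gives the same limit, which establishes the claimed identity.

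The main obstacle will be the second step: transferring uniform convergence of the marginal distributions $X_i \to U$ into weak convergence of the empirical measure $\mu_N \to \mu_U$ in the absence of an independence hypothesis. Splitting at $N_0$ disposes of contributions from the non-limit regime, but the Cesaro convergence of tail averages of bounded continuous functions of near-uniform samples genuinely requires some decorrelation-type property. I would need to interpret the "uniformly" in the hypothesis strongly enough to underwrite a Glivenko--Cantelli-style conclusion for the empirical measure, or equivalently to read an implicit ergodic assumption into the joint law of $(X_i)$; without such a strengthening, the statement would need to be amended to include explicit conditions ensuring convergence of the weighted empirical averages.
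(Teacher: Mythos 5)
Your approach departs from the paper's at the very first step: how to read ``$X_i \rightarrow U$ uniformly.'' You treat it as convergence of the \emph{laws} of the $X_i$ to the uniform distribution, rewrite the ratio via the normalized empirical measure $\mu_N$, and try to pass to $\int f g \,/ \int g$ by weak convergence. You then correctly observe that this route is blocked: without independence or some ergodic/decorrelation property of the joint law, convergence of marginals does not give $\mu_N \to \mu_U$, and you suggest the lemma may need strengthening. That diagnosis is accurate \emph{for the distributional reading} of the hypothesis.

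The paper avoids the gap by reading the hypothesis differently. It fixes a probability space $(\Omega,\mathcal F,P)$ and interprets $X_i \to U$ uniformly as sup-norm convergence of the random variables \emph{as measurable functions}: for every $\epsilon > 0$ there is $N(\epsilon)$ with $|X_i(\omega) - U(\omega)| < \epsilon$ for all $i > N(\epsilon)$ and all $\omega$. Crucially, the samples on the two sides of the claimed identity are then \emph{coupled}: $x_i := X_i(\omega_i)$ and $u_i := U(\omega_i)$ use the same outcome $\omega_i$, so $|x_i - u_i| < \epsilon$ for all $i > N(\epsilon)$. Analyticity (hence local Lipschitzness on the compact domain) gives $f(x_i) = f(u_i) + O(\epsilon)$ and $g(x, x_i) = g(x, u_i) + O(\epsilon)$, and the paper splits each partial sum at $N(\epsilon)$, shows the head is $O(1)$ and negligible, and shows the tails differ by a multiplicative factor $1 + O(\epsilon)$; letting $\epsilon \to 0$ gives asymptotic equivalence of numerators and of denominators separately, hence equality of the ratios. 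Your question about how to relate empirical averages over the $x_i$ to those over the $u_i$ never arises because the two sequences are forced close term-by-term, not merely in distribution. So the proposal is not wrong so much as aimed at a different, weaker hypothesis; to match the paper you should interpret ``uniformly'' as the strong coupled/functional form, after which the missing decorrelation machinery becomes unnecessary. Your proposed reduction to the integral expression $\int_D f(y)g(x,y)\,\dd y / \int_D g(x,y)\,\dd y$ is actually more than the lemma claims (the paper never passes to this integral form inside the lemma; it does that later in the theorem proof), and under the correct reading of the hypothesis it would still need some law-of-large-numbers input about the $\omega_i$ sequence, which the lemma deliberately avoids by comparing the two $u$-free and $u$-dependent sums directly.
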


\begin{proof}
    Fix a probability space $(\Omega, \mathcal{F}, P)$ on which $(X_i)$ and $U$ are defined. Recall that a continuous random variable $X$ that takes values on $D \subset \mathbb{R}^d$ is a measurable function $X : \Omega \rightarrow D$ where $(D, \mathcal{B})$ is a measure space and $\mathcal{B}$ is the Borel $\sigma$-algebra on $D$. Let $\omega \subset \Omega$ denote an arbitrary element of the sample space. The requirement that $X_i \rightarrow U$ uniformly can be written formally as:
    \[
    \forall \epsilon > 0, \exists N(\epsilon) \text{ s.t. } \forall i > N(\epsilon), \forall \omega \subset \Omega, \vert X_i(\omega) - U(\omega) \vert < \epsilon.
    \]
    We prove the Lemma by showing that equality holds for all possible sequences of outcomes $\omega_1, \omega_2, \dots$. That is, we prove:
    \begin{equation}
    \label{eqn:asymptoticRatio}
    \frac{\sum_{i=1}^\infty f(X_i(\omega_i)) g(x, X_i(\omega_i))}{\sum_{i=1}^{\infty} g(x, X_i(\omega_i))} = \frac{\sum_{i=1}^\infty f(U(\omega_i)) g(x, U(\omega_i)))}{\sum_{i=1}^{\infty} g(x, U(\omega_i)))}.
    \end{equation}
    Since these are ratios of infinite series, to prove their equality it is sufficient to show that the numerator of the LHS is asymptotically equivalent to the numerator of the RHS and that the denominator of the LHS is asymptotically equivalent to the denominator of the RHS. Recall that two sequences of real functions $(a_n)$ and $(b_n)$ are asymptotically equivalent if $\lim_{n \rightarrow \infty} \frac{a_n(x)}{b_n(x)} = c$ where $c$ is a constant. First, we prove that this holds with 
    \begin{equation}
    \label{eqn:a_n}
    a_n \vcentcolon= \sum_{i=1}^n f(X_i(\omega_i)) g(x, X_i(\omega_i))
    \end{equation}
    and 
    \begin{equation}
    \label{eqn:b_n}
    b_n \vcentcolon= \sum_{i=1}^n f(U(\omega_i)) g(x, U(\omega_i))).
    \end{equation}
    We write 
    \[
    \frac{a_n}{b_n} = \frac{\sum_{i=1}^{N(\epsilon)} f(X_i(\omega_i)) g(x, X_i(\omega_i)) + \sum_{i=N(\epsilon) + 1}^n f(X_i(\omega_i)) g(x, X_i(\omega_i))}{\sum_{i=1}^{N(\epsilon)} f(U(\omega_i)) g(x, U(\omega_i))) + \sum_{i=N(\epsilon) + 1}^n f(U(\omega_i)) g(x, U(\omega_i)))}.
    \]
    Dividing by $\sum_{i=N(\epsilon) + 1}^n f(U(\omega_i)) g(x, U(\omega_i)))$ and taking the limit $n \rightarrow \infty$ we have
    \[
    \lim_{n \rightarrow \infty} \frac{a_n}{b_n} = \lim_{n \rightarrow \infty} \frac{\sum_{i=N(\epsilon) + 1}^n f(X_i(\omega_i)) g(x, X_i(\omega_i))}{\sum_{i=N(\epsilon) + 1}^n f(U(\omega_i)) g(x, U(\omega_i)))}.
    \]
    Since $f$ and $g$ are analytic and $i > N(\epsilon)$ for all terms in the sums we have, by Taylor expansion, $f(X_i(\omega_i)) = f(U(\omega_i)) + O(\epsilon)$ and $g(x, X_i(\omega_i)) = g(x, U(\omega_i)) + O(\epsilon)$, hence
    \[
    \lim_{n \rightarrow \infty} \frac{a_n}{b_n} = \lim_{n \rightarrow \infty} \left(1 + \frac{n O(\epsilon)}{\sum_{i=N(\epsilon) + 1}^n f(U(\omega_i)) g(x, U(\omega_i)))}\right) = 1 + \lim_{n \rightarrow \infty} \frac{n O(\epsilon)}{O(n)} = 1 + O(\epsilon).
    \]
    Finally, since $\epsilon$ can be made arbitrarily small by partitioning the sum at a $N(\epsilon)$ that is sufficiently large, we conclude that $\lim_{n \rightarrow \infty} \frac{a_n}{b_n} = 1$, hence $(a_n)$ and $(b_n)$ as defined in \eqref{eqn:a_n} and \eqref{eqn:b_n} are asymptotically equivalent. By a similar argument, it can be shown that
    \[
    c_n \vcentcolon= \sum_{i=1}^n g(x, X_i(\omega_i))
    \]
    and 
    \[
    d_n \vcentcolon= \sum_{i=1}^n g(x, U(\omega_i)))
    \]
    are also asymptotically equivalent. This proves \eqref{eqn:asymptoticRatio}.
    
\end{proof}

Below, we present the proof of Theorem \ref{thm:adaptiveMetadynamics} that appears in the main text.

\begin{proof}
    \textit{For concreteness, throughout this proof we assume that the kernel function is a Gaussian. We explain at the appropriate stage in the proof, indicated by (*), how this assumption can be relaxed.}

    First we take the $n \rightarrow \infty$ limit. Recall the notation from Section \ref{sec:adaptedMetad}, i.e. assume collective variables $z(x)=(z_1(x), \dots, z_d(x))$ where $z \vcentcolon \mathcal{X} \rightarrow \mathcal{Z}$, and $\mathcal{Z}$ is $d$-dimensional.  Since the log function is continuous, the limit and log can be interchanged and we have
    \[
    \lim_{n \rightarrow \infty} \hat{V}(z, t_n) = -\frac{1}{\beta'} \log \left( \lim_{n \rightarrow \infty} \left(\frac{\hat{R}(z,t_n)}{\hat{N}(z,t_n) + \epsilon}\right) + \epsilon \right),
    \]
    where, from Algorithm \ref{alg:GFlowNetMetadynamics}:
    \begin{equation}
    \label{eqn:R_hat_appendix}
    \hat{R}(z, t_n) = \sum_{i=1}^n r(x_{t_i}) \exp{\left( - \sum_{j=1}^d \frac{(z_j-z_j(x_{t_i}))^2}{2 {\sigma}_j^2} \right)},
    \end{equation}
    \begin{equation}
    \label{eqn:N_hat_appendix}
    \hat{N}(z, t_n) = \sum_{i=1}^n \exp{\left( - \sum_{j=1}^d \frac{(z_j-z_j(x_{t_i}))^2}{2 {\sigma}_j^2} \right)}.
    \end{equation}
    Since the domain is bounded, we know that for fixed $z$, both \eqref{eqn:R_hat_appendix} and \eqref{eqn:N_hat_appendix} have limit at infinity, i.e. $\lim_{n \rightarrow \infty} \hat{R}(z,t_n) = \infty$ and $\lim_{n \rightarrow \infty} \hat{N}(z,t_n) = \infty$. Hence, by Lemma \ref{lemma:quotientRule}, we have
    \[
    \lim_{n \rightarrow \infty} \frac{\hat{R}(z,t_n)}{\hat{N}(z,t_n) + \epsilon} = \lim_{n \rightarrow \infty} \frac{\hat{R}(z,t_n)}{\hat{N}(z,t_n)},
    \]
    provided the limit on the RHS exists. Next, we show that this limit exists by computing it explicitly. 
    The limit can be written
    \[
    \lim_{n \rightarrow \infty }\frac{\hat{R}(z, t_n)}{\hat{N}(z,t_n)} = \lim_{n \rightarrow \infty }\frac{\sum_{i=1}^n r(x_{t_i}) \exp\left(- \sum_{j=1}^d \frac{(z_j-z_j(x_{t_i}))^2}{2 {\sigma}_j^2} \right)}{\sum_{i=1}^n \exp\left(- \sum_{j=1}^d \frac{(z_j-z_j(x_{t_i}))^2}{2 {\sigma}_j^2} \right)}.
    \]
    Recall that metadynamics eventually leads to uniform sampling over the domain, independent of the potential. Hence, since $R$ and $z(x)$ are analytic, by Lemma \ref{lemma:uniformConvergence} we may replace the metadynamics samples $x_{t_i}$ with samples $u_i$ from a uniform distribution over $\mathcal{X}$:
    \[
     \lim_{n \rightarrow \infty }\frac{\hat{R}(z, t_n)}{\hat{N}(z,t_n)} = \lim_{n \rightarrow \infty} \frac{\sum_{i=1}^n r(u_i) \exp\left(- \sum_{j=1}^d \frac{(z_j-z_j(u_i))^2}{2 {\sigma}_j^2} \right)}{\sum_{i=1}^n \exp\left(- \sum_{j=1}^d \frac{(z_j-z_j(u_i))^2}{2 {\sigma}_j^2} \right)}.
    \]
    In the limit, the ratio of sums with uniform sampling becomes a ratio of integrals:
    \[
     \lim_{n \rightarrow \infty }\frac{\hat{R}(z, t_n)}{\hat{N}(z,t_n)} = \frac{\int_{\mathcal{X}} r(x') \exp\left(- \sum_{j=1}^d \frac{(z_j-z_j(x'))^2}{2 {\sigma}_j^2} \right) \dd x'}{\int_{\mathcal{X}} \exp\left(- \sum_{j=1}^d \frac{(z_j-z_j(x'))^2}{2 {\sigma}_j^2} \right) \dd x'}.
    \]
   The limit is therefore a (scaled) convolution of the reward function with a Gaussian in the collective variable space with width vector $\sigma$. Taking the limit $\sigma_j \rightarrow 0$ for all $j \in \left\{1, 2, \dots, d \right\}$, the Gaussian convergences to a delta distribution in the collective variable space and we have 
    \[
   \lim_{\sigma \rightarrow 0} \lim_{n \rightarrow \infty }\frac{\hat{R}(z, t_n)}{\hat{N}(z,t_n)} = \int_{\mathcal{X}} r(x') \delta (z - z(x')) \dd x'.
    \]
    \textit{(*) This step also holds for any kernel that becomes distributionally equivalent to a Dirac delta function in the limit that its variance parameter goes to zero. In particular, it also holds for the von Mises distribution that we use in our alanine dipeptide experiment in $\mathbb{T}^2$.}
    
    Finally, we take the limit $\epsilon \rightarrow 0$ to obtain
    \begin{align}
  \lim_{\epsilon \rightarrow 0} \lim_{\sigma \rightarrow 0} \lim_{n \rightarrow \infty} \hat{V}(z, t_n) &= - \frac{1}{\beta'} \lim_{\epsilon \rightarrow 0} \log \left(\int_{\mathcal{X}} r(x') \delta (z - z(x')) \dd x' + \epsilon \right) 
  \\ &= - \frac{1}{\beta'} \int_{\mathcal{X}} \log \left(r(x') \right)\delta (z - z(x')) \dd x' \\ &=  \int_{\mathcal{X}} V(x') \delta(z - z(x')) \dd x' \vcentcolon= V(z),
    \end{align}
    where we have used the definition $V(x') = - \frac{1}{\beta'} \log (r(x'))$ and in the last step we used the definition of the marginal potential energy in the collective variable space. If $z(x)$ is invertible, then the delta function simplifies to a delta function in the original space and we obtain the original potential instead of the marginal potential. 
\end{proof}

\section{Experiment details}

\label{appx:experimentalDetails}

\subsection{Common experimental parameters}

\textbf{Training parameters}: For all environments we train for $B=10^5$ batches use a learning rate with a linear schedule, starting at $10^{-3}$ and finishing at $0$. For the TB loss, we train the $\log Z_{\theta}$ with a higher initial learning rate of $10^{-1}$ (also linearly scheduled). For the STB loss, we use $\lambda = 0.9$, a value that has worked well in the discrete setting \cite{madan_learning_2022}. We use the Adam optimiser with gradient clipping. For all loss functions, we clip the minimum log-reward signal at $-10$. This enables the model to learn despite regions of near-zero reward between the modes of $r(x)$. 

\textbf{Replay buffer}: We use a replay buffer with capacity for $10^4$ trajectories. Trajectories are stored in the replay buffer only if the terminal state's reward exceeds $10^{-3}$. When drawing a replay buffer batch, trajectories are bias-sampled: $50\%$ randomly drawn from the upper $30\%$ of trajectories with the highest rewards, and the remaining $50\%$ randomly drawn from the lower $70\%$. 

\begin{wrapfigure}[15]{R}{0.398\textwidth}
\centering
\includegraphics[width=0.398\textwidth]{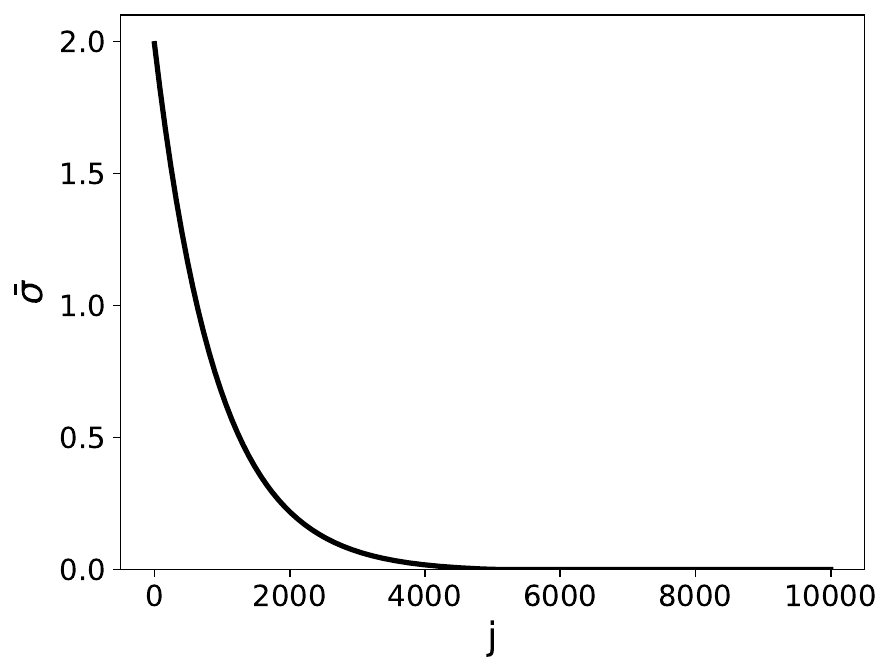}
\caption{Exponential noise schedule.}
\label{fig:noiseSchedule}
\end{wrapfigure}

\textbf{Noisy exploration}: An additional constant, $\Bar{\sigma}$, is added to the standard deviations of the Gaussian distributions of the forward and backward policies. Specifically, the forward policy becomes $\hat{p}_F(s_t, s_{t-1}; \theta) = \sum_{i=1}^k w_i \mathcal{N}(\mu_i, (\sigma_i + \bar{\sigma})^2)$, and similarly for the backward policy, where $k$ is the number of Gaussians in the mixture. We schedule the value of $\bar{\sigma}$ so that it decreases during training according to an exponential-flat schedule:
\begin{equation}
\label{eqn:explorationNoise}
\bar{\sigma} = \begin{cases}
    \bar{\sigma}_0 \left(e^{-2je/(B/2)} - e^{-2e}\right) \; \; &j < B/2 \\
    0 \; \;  &j \geq B/2,
\end{cases}
\end{equation}
where $j \in (1, \dots, B)$ is the batch number and $\bar{\sigma}_0 = 2$ is the initial noise, plotted in Figure \ref{fig:noiseSchedule}. Note that, for the alanine dipeptide environment, the policy is a mixture of bivariate von Mises distributions and the noise $\Bar{\sigma}$ is added to the concentration parameter $\kappa$, where concentration is related to standard deviation through $\sigma = \frac{1}{\kappa^2}$.

\textbf{Thompson sampling}: We use $10$ heads with the bootstrapping probability parameter set to $p=0.3$.

\textbf{Local Search GFN}: Local search samples trajectories using a forward-backward reconstruction method. First, forward policy samples actions to generate batch size $b$ on-policy trajectories. Then, the backward policy is used to re-sample the last $K$ steps of each trajectory. Finally, the forward policy is used to reconstruct the last $K$ steps, and trajectories in the batch with higher rewards from reconstruction replace the originals. We use $K=1$ for the line environment, alanine dipeptide environment and two-dimensional grid. We use $K=2$ and $K=3$ for the three and four dimensional (hyper)grids, respectively.

\textbf{Nested sampling}: We use the Gradient Based Nested Sampling (GBNS) \cite{lemos_improving_2023} with a publicly available implementation: \url{https://github.com/Pablo-Lemos/GGNS}. For each environment, GBNS generates samples from the reward distribution. During training, we alternate between two types of backpropagation: one fully on-policy, and the other using backward-sampled trajectories from the generated samples, guided by the current backward policy.

\textbf{Compute resources}: Experiments are performed in PyTorch on a desktop with 32Gb of RAM and a 12-core 2.60GHz i7-10750H CPU. 

\subsection{Line environment details}
\label{appx:lineEnvironmentSetup}

\textbf{Parameterisation}: The forward and backward policies are a mixture of three Gaussians. We parameterise $\hat{p}_F$, $\hat{p}_B$ and the flow $\hat{f}$ through an MLP with 3 hidden layers, 256 hidden units per layer. We use the GELU activation function and dropout probability $0.2$ after each layer. This defines the torso of the MLP. Connecting from this common torso, the MLP has three single-layer, fully-connected heads. The first two heads have output dimension $9$ and parameterise the $3$ means ($\mu$), standard deviations ($\sigma$) and weights ($w$) of the mixture of Gaussians for the forward and backward policies respectively. The third head has output dimension $1$ and parameterises the flow function $\hat{f}$. The mean and standard deviation outputs are passed through a sigmoid function and transformed so that they map to the ranges $\mu \in (-14, 14)$ and $\sigma \in (0.1, 1)$. The mixture weights are normalised with the softmax function. The exception to this parameterisation is the backward transition to the source state which is fixed to be the Dirac delta distribution centred on the source, i.e. $\hat{p}_B(s_0 | s_1; \theta) = \delta_{s_0}$. For the TB loss, we treat $\log Z_{\theta}$ as a separate learnable parameter. We use batch size $b = 64$. It takes approximately $1$ hour to train a continuous GFlowNet in this environment with $B=10^5$ batches.

\textbf{MetaGFN}: We use $\Delta t = 0.05$, $n = 2$, $\beta = 1$, $\gamma = 2$, $w = 0.15$, $\sigma = 0.1$, $\epsilon = 10^{-3}$. The domain of Adapted Metadynamics is restricted to $[-5, 23]$ and reflection conditions are imposed at the boundary. The bias and KDE potentials are stored on a uniform grid with grid spacing $0.01$. Initial metadynamics samples are drawn from a Gaussian distribution, mean $0$ and variance $1$, and initial momenta from a Gaussian distribution, mean $0$ and variance $0.5$.

\textbf{Evaluation}: The L1 error between the known reward distribution, $\rho(x) = r(x) / Z$, and the empirical on-policy distribution, denoted $\hat{\rho}(x)$, estimated by sampling $10^4$ on-policy trajectories and computing the empirical distribution over terminal states. Specifically, we compute $
\label{eqn:L1errorFormula}
\text{error} = \frac{1}{2}\int_{-5}^{23} \left|\hat{\rho}(x) - \frac{r(x)}{Z}\right| \dd x$,
where the integral is estimated by a discrete sum with grid spacing $0.01$. Note that this error is normalised such that for all valid probability distributions $\hat{\rho}(x)$, we have $0 \leq \text{error} \leq 1$. 

\textbf{On-policy distributions}: Figure \ref{fig:TB_trajs} shows the forward policy and replay buffer distributions (with bias sampling) after training for $10^5$ iterations with TB loss. MetaGFN is the only method that can uniformly populate the replay buffer and consistently learn all three peaks.

\textbf{Adapted Metadynamics}: Figure \ref{fig:potentialKDEError} shows the L1 error between the density implied by the KDE potential and the true reward distribution during a typical training run. Figure \ref{fig:metadynamicsPotentials} shows the resulting $\hat{V}$ and $V_{\text{bias}}$ at the end of the training. By (1), Adapted Metadynamics has fully explored the central peaks. At (2), the third peak is discovered, prompting rapid adjustment of the KDE potential. By $2.5 \times 10^4$ iterations, a steady state is reached and the algorithm is sampling the domain uniformly.

\textbf{Sensitivity analysis of hyperparameters}: We compare MetaGFN performance for various $\texttt{freqRB}$ and $\texttt{freqMD}$ values, showing the mean and standard deviation of the final L1 error after $10^5$ batches (Figure \ref{fig:sensitivityPlot}). Intuitively, we expect that if $\texttt{freqRB}$ is set too large, then MetaGFN focuses predominantly on the on-policy dynamics and may fail to learn to sample distant modes. By contrast, setting $\texttt{freqRB} = 1$ means exclusively training on the replay buffer, without any on-policy training trajectories, which we would expect to lead to slow convergence. Therefore, an intermediate value of $\texttt{freqRB}$ should be optimal. We found that setting $\texttt{freqRB} = 2$ (giving an equal weighting to both on-policy and replay buffer trajectories) showed the lowest loss. By contrast, performance is not too sensitive to $\texttt{freqMD}$. $\texttt{freqMD}$ should be set so that Adapted Metadynamics explores the reward landscape in a comparable time it takes to train the GFN to convergence, hence only the order of magnitude of $\texttt{freqMD}$ is important. For concreteness, we choose $\texttt{freqMD} = 10$ but the model performance is not too sensitive to this parameter.

\begin{figure}
    \centering
    \includegraphics[width=0.9\linewidth]{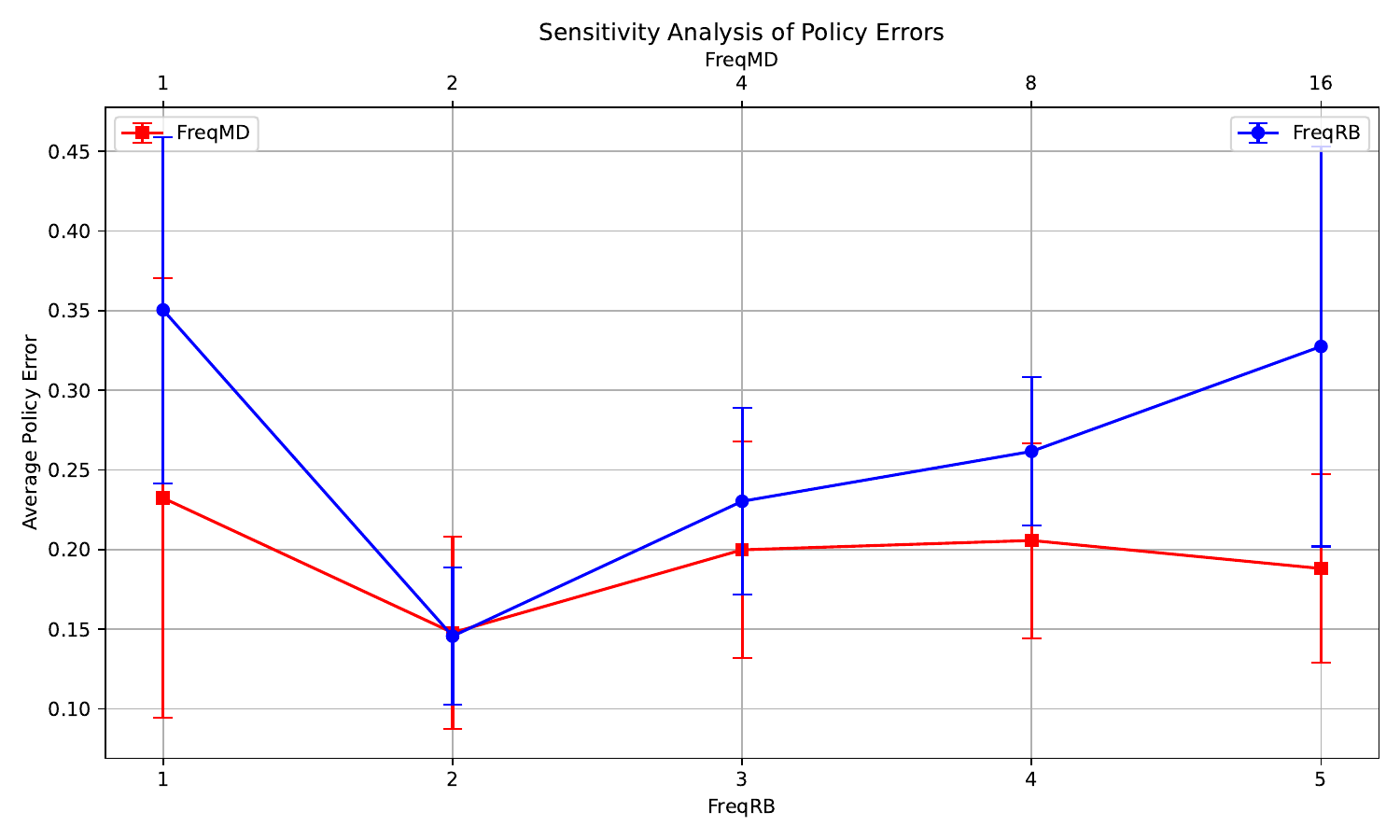}
    \caption{Mean and standard deviation L1 loss after $10^5$ training batches over $10$ repeats in the line environment. Blue line: changing $\texttt{freqRB}$ while $\texttt{freqMD} = 10$. Red line: changing $\texttt{freqMD}$ while $\texttt{freqRB} = 2$.}
    \label{fig:sensitivityPlot}
\end{figure}

\begin{figure}
    \centering
    \includegraphics[scale=0.35]{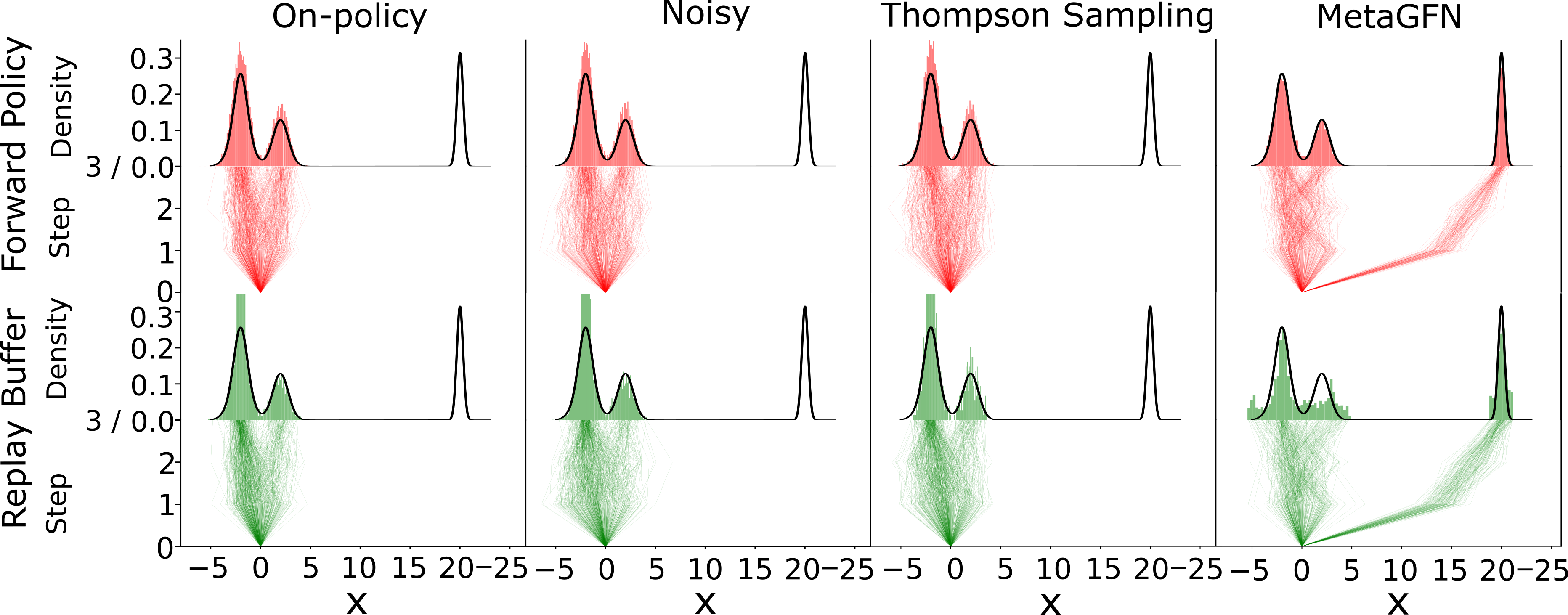}
    \caption{Forward policy and replay buffer distributions after training for $10^5$ iterations with TB loss. MetaGFN is the only method that can consistently learn all three peaks.}
    \label{fig:TB_trajs}
\end{figure}

\begin{figure}
    \label{fig:metadynamics}
    \centering
    \begin{subfigure}[t]{0.34\textwidth}
        \centering
        \includegraphics[width=\textwidth]{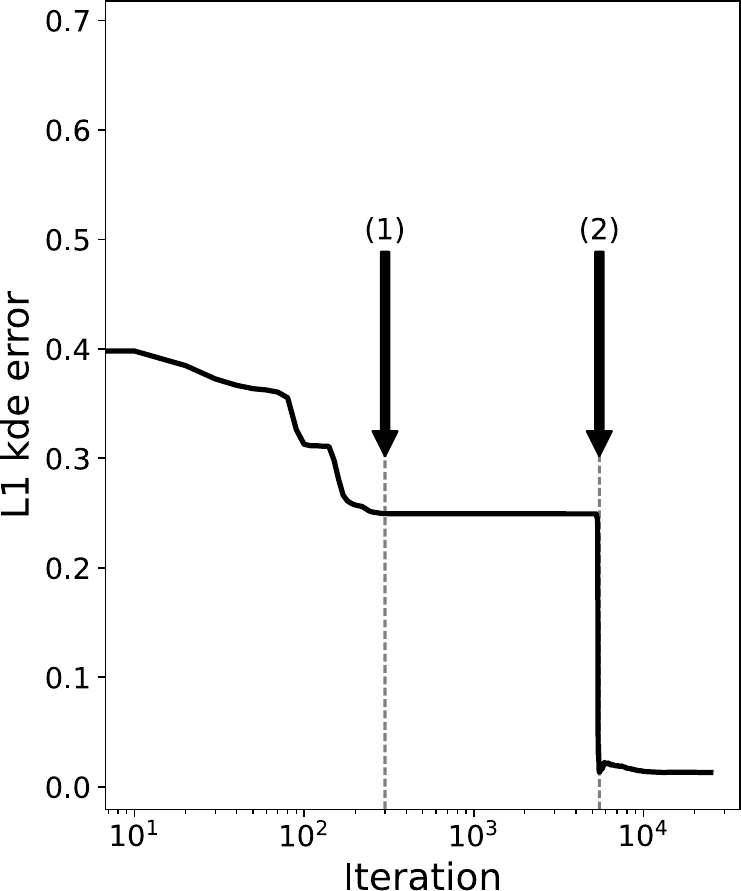}
        \caption{}
        \label{fig:potentialKDEError}
    \end{subfigure}%
    \hspace{0.5cm}
    \begin{subfigure}[t]{0.35\textwidth}
        \centering
        \includegraphics[width=\textwidth]{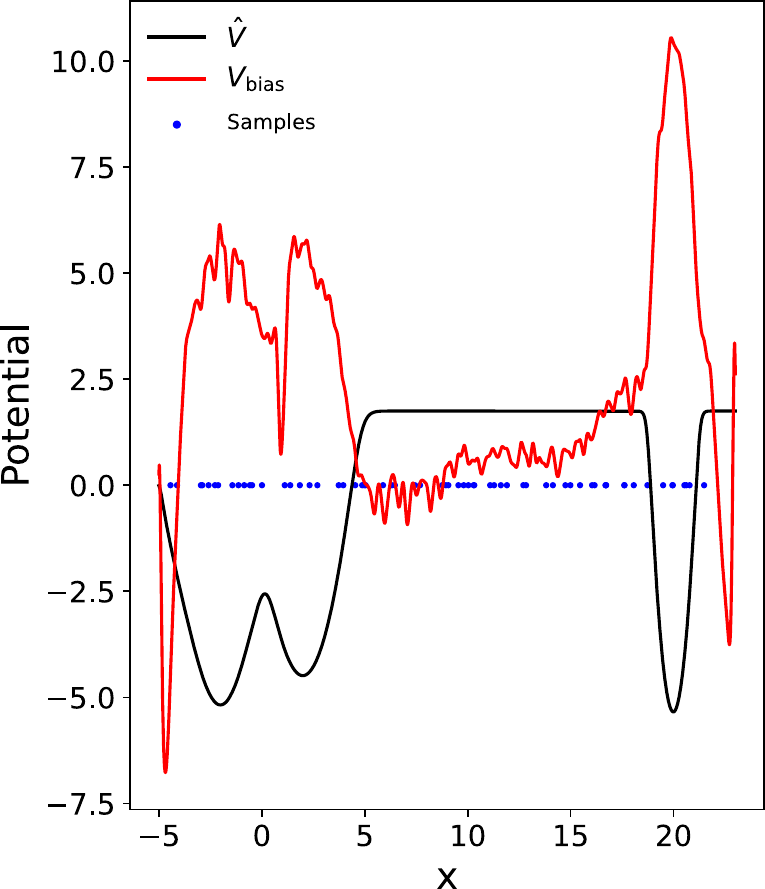}
        \caption{}
        \label{fig:metadynamicsPotentials}
    \end{subfigure}
    \caption{(a) L1 error between $\hat{\rho} = \exp(-\beta \hat{V}(x)) / Z$ and the reward distribution, $r(x)/Z$
    (b) kde potential, bias potential, and positions of final samples after $2.5 \times 10^4$ training iterations.}
\end{figure}

\textbf{Comparing MetaGFN variants}: We consider three MetaGFN variants. The first variant, \textit{always backwards sample}, regenerates the entire trajectory using the current backward policy when pulling from the replay buffer. The second variant, \textit{reuse initial backwards sample}, generates the trajectory when first added to the replay buffer and reuses the entire trajectory if subsequently sampled. The third variant, \textit{with noise}, is to always backwards sample with noisy exploration as per equation \eqref{eqn:explorationNoise}. We plot the L1 policy errors in Figure \ref{fig:differentMetadynamicsMethods}. We observe that \textit{always backward sample} is better than \textit{reuse initial backwards sample} for all loss functions. For DB and TB losses, there is no evidence for any benefit of adding noise, whereas noise improves training for STB loss, performing very similarly to TB loss without noise. 

\begin{figure}
    \centering
    \includegraphics[scale=0.4]{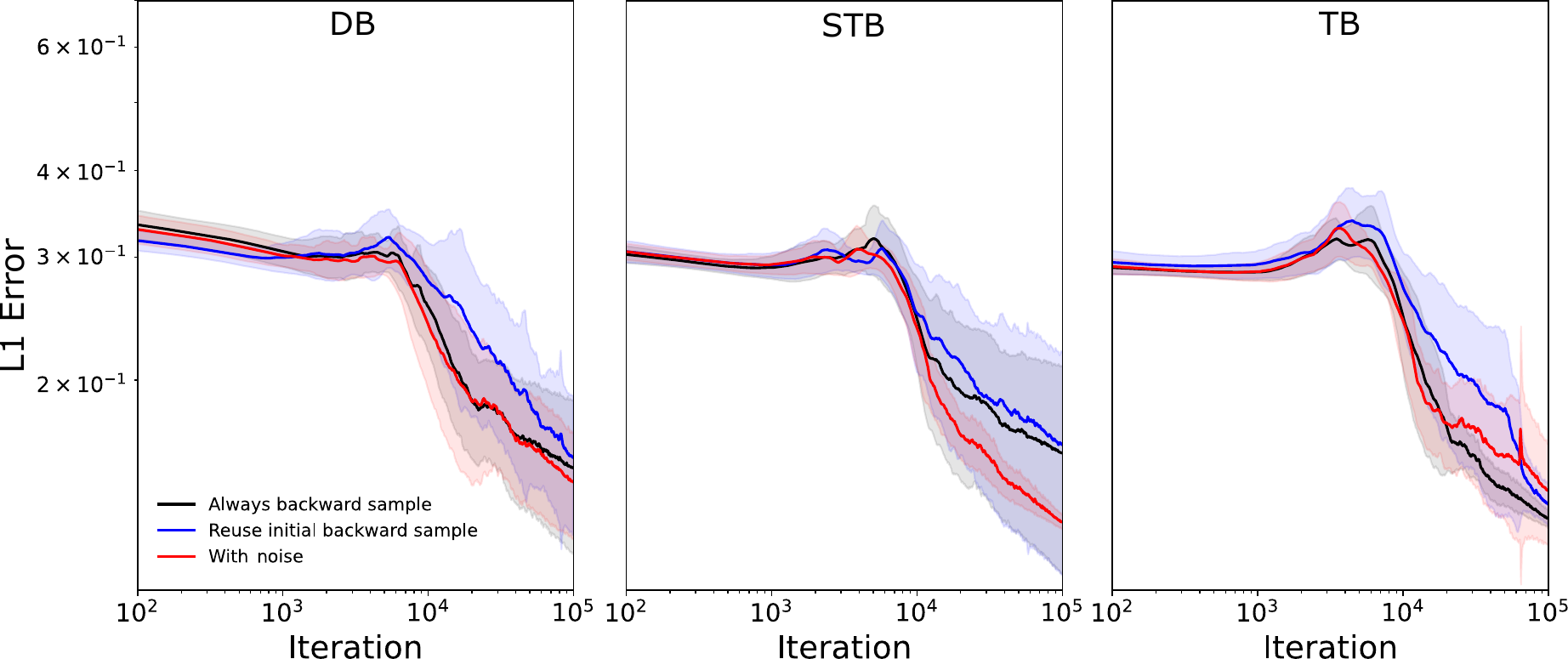}
    \caption{Comparing MetaGFN variants.}
    \label{fig:differentMetadynamicsMethods}
\end{figure}

\subsection{Alanine dipeptide environment details}
\label{appx:alanineExperimentDetails}

\begin{wrapfigure}[12]{R}{0.30\textwidth}
    \centering
    \includegraphics[scale=0.3]{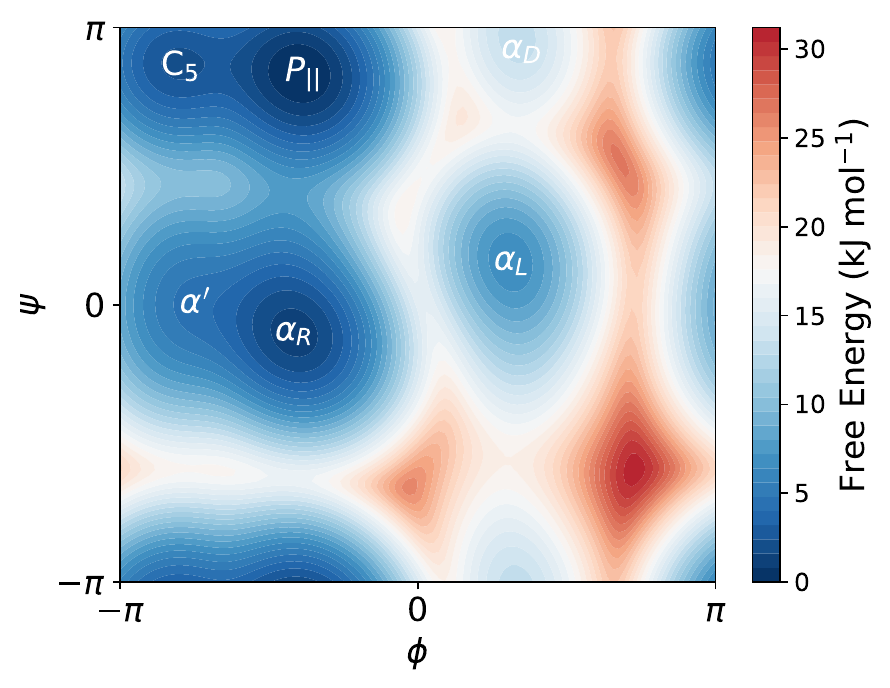}
    \caption{The potential KDE $\hat{V}$ learnt using MetaGFN on the alanine dipeptide environment, concentration $\kappa = 10$.}
    \label{fig:metadKDE}
\end{wrapfigure}

\textbf{Computing the free energy surface}: To obtain a ground-truth free energy surface (FES) in $\phi$-$\psi$ space, we ran a 250ns NPT well-tempered metadynamics MD simulation of alanine dipeptide at temperature 300K ($\beta = 0.4009$), pressure 1bar with the TIP3P explicit water model \cite{jorgensen_comparison_1983}. We used the PLUMMED plugin \cite{bonomi_promoting_2019} for OpenMM \cite{eastman_openmm_2017} with the AMBER14 force field \cite{salomon-ferrer_overview_2013}. 

\textbf{Parameterisation}: The forward and backward policies are a mixture of three bivariate von Mises distributions. We parameterise $\hat{p}_F$, $\hat{p}_B$ and $\hat{f}$ through three heads of an MLP with 3 hidden layers with 512 hidden units per layer, with GeLU activations and dropout probability $0.2$.  The first two heads have output dimensions of $15$, parameterising the $6$ means, $6$ concentrations, and $3$ weights of the mixture of von Mises policy. The third head has output dimension $1$ and parameterises the flow function $\hat{f}$. The means are mapped to the range $(-\pi, \pi)$ through $2 \arctan (\cdot)$. Concentrations are parameterised in log space and passed through a sigmoid to map to the range $\ln(\kappa) \in (0, 5)$. Mixture weights are normalised with the softmax function. We use batch size $b=64$. It takes approximately $10$ hours to train a continuous GFlowNet in this environment with $B=10^5$ batches.

\textbf{MetaGFN}: We use $\texttt{freqRB} = 2$, $\texttt{freqMD} = 10$, $\Delta t = 0.01$, $n=2$, $\beta = 0.4009$, $\gamma = 0.1$, $w = 10^{-5}$, $\kappa = 10$, $\epsilon = 10^{-6}$. The bias and KDE potentials are stored on a uniform grid with grid spacing $0.1$. Initial samples are drawn from a Gaussian distribution, mean centred $P_{\vert \vert}$, variance $\sigma^2 = (0.1, 0.1)$, and initial momenta from a Gaussian, mean $\mu = (0,0)$, variance $\sigma^2 = (0.05, 0.05)$. In Figure \ref{fig:metadKDE} we show the resulting learnt KDE potential with these parameters.

\textbf{Evaluation}: The L1 error of a histogram of on-policy samples, $\hat{\rho}(\phi, \psi)$, is computed via a two-dimensional generalisation of \eqref{eqn:L1errorFormula}; $
\text{error} = \frac{1}{2}\int_{-\pi}^{\pi} \int_{-\pi}^{\pi} \left\vert \hat{\rho}(\phi, \psi) - \frac{r(\phi, \psi)}{Z} \right\vert \dd \phi \dd \psi$, estimated by a discrete sum with grid spacing $0.1$.

\subsection{Grid environments details}
\label{appx:gridEnvironment}

\textbf{Parameterisation}: For the two-dimensional grid, we use a mixture of $4$ Gaussians. For the three and four dimensional (hyper)grids, we use a mixture of $2$ Gaussians (but use longer trajectories, see Section \ref{sec:exp}). We parameterise $\hat{p}_F$, $\hat{p}_B$ and the flow $\hat{f}$ through an MLP with 3 hidden layers, 512 hidden units per layer, GELU activation function and dropout probability $0.2$. The means and standard deviations are mapped to the range $\mu \in (-15, 15)$ and $\sigma \in (0.1, 7)$ through sigmoid functions. Mixture weights are normalised with the softmax function. We use batch size $b=128$. It takes approximately 10 hours to train a continuous GFlowNet in this environment with $B=10^5$ batches.

\textbf{MetaGFN}: We use $\texttt{freqRB}=2$, $\texttt{freqMD}=10$, $\Delta t = 0.35$, $n=3$, $\beta=1$, $\gamma=2$, $w=0.10$, $\sigma=2$. The bias and KDE potentials are stored on a uniform grid with grid spacing $0.075/0.3/0.6$ (2, 3, and 4 dimensions respectively).  Initial samples are drawn from a Gaussian distribution, mean centred at the origin, isotropic variance $\sigma^2 = 1$. Momenta are initialised to zero.

\textbf{Evaluation}: The L1 error between the histogram of the terminal states of $10^4$ on-policy samples and the exact distribution is computed, e.g. for the 2D grid, $
\text{error} = \frac{1}{2}\int_{-15}^{15} \int_{-15}^{15} \left\vert \hat{\rho}(x, y) - \frac{r(x, y)}{Z} \right\vert \dd x \dd y$, estimated by a discrete sum with grid spacing $0.075$. 

\end{document}